\newcommand{\fft}{Sine fitting }
\renewcommandx{\mod}[2][2=\frac{N}{2}]{#1 \ \ensuremath{\mathrm{mod}\  #2}}
\newcommand{\floor}[1]{\left\lfloor #1  \right\rfloor}
\newcommand{\round}[1]{\ensuremath{\left\lfloor #1 \right\rceil}}
\newcommandx{\Sin}[2][2=2]{\sin^{#2}\term{#1}}
\begin{document}

%

%
%
\algnewcommand\algorithmicparfor{\textbf{parfor}}
\algnewcommand\algorithmicpardo{\textbf{do}}
\algnewcommand\algorithmicendparfor{\textbf{end\ parfor}}
\algrenewtext{ParFor}[1]{\algorithmicparfor\ #1\ \algorithmicpardo}
\algrenewtext{EndParFor}{\algorithmicendparfor}

\twocolumn[

\aistatstitle{Coresets for Data Discretization and Sine Wave Fitting}

\aistatsauthor{Alaa Maalouf \And Murad Tukan}

\aistatsaddress{ University of Haifa \And  University of Haifa} 

\aistatsauthor{Eric Price \And Daniel Kane \And Dan Feldman}

\aistatsaddress{University of Texas at Austin  \And University of California \And  University of Haifa} 
\runningauthor{Alaa Maalouf, Murad Tukan, Eric Price, Daniel Kane, Dan Feldman}
]

\begin{abstract}
In the \emph{monitoring} problem, the input is an unbounded stream $P={p_1,p_2\cdots}$ of integers in $[N]:=\{1,\cdots,N\}$, that are obtained from a sensor (such as GPS or heart beats of a human). The goal (e.g., for anomaly detection) is to approximate the $n$ points received so far in $P$ by a single frequency $\sin$, e.g. $\min_{c\in C}cost(P,c)+\lambda(c)$, where $cost(P,c)=\sum_{i=1}^n \sin^2(\frac{2\pi}{N} p_ic)$, $C\subseteq [N]$ is a feasible set of solutions, and $\lambda$ is a given regularization function. 
For any approximation error $\varepsilon>0$, we prove that \emph{every} set $P$ of $n$ integers has a weighted subset $S\subseteq P$ (sometimes called core-set) of cardinality $|S|\in O(\log(N)^{O(1)})$ that approximates $cost(P,c)$ (for every $c\in [N]$) up to a multiplicative factor of $1\pm\varepsilon$. 
Using known coreset techniques, this implies streaming algorithms using only $O((\log(N)\log(n))^{O(1)})$ memory.  
Our results hold for a large family of functions. Experimental results and open source code are provided.
\end{abstract}

\section{INTRODUCTION AND MOTIVATION}
\textbf{Anomaly detection} is a step in data mining which aims to identify unexpected data points, events, and/or observations in data sets. 
For example, we are given an unbounded stream $P={p_1,p_2\cdots}$ of numbers that are obtained from a heart beats of a human (hospital patients) sensor, and the goal is to detect inconsistent spikes in heartbeats. This is crucial for proper examination of patients as well as valid evaluation of their health. Such data forms a wave which can be approximated using a \emph{sine} wave. 
Fitting a large data of this form (heart wave signals), will result in obtaining an approximation towards the distribution from which the data comes from. Such observation aids in detection of outliers or anomalies.

\begin{figure}[htb!]
\centering
\includegraphics[width=0.8\linewidth]{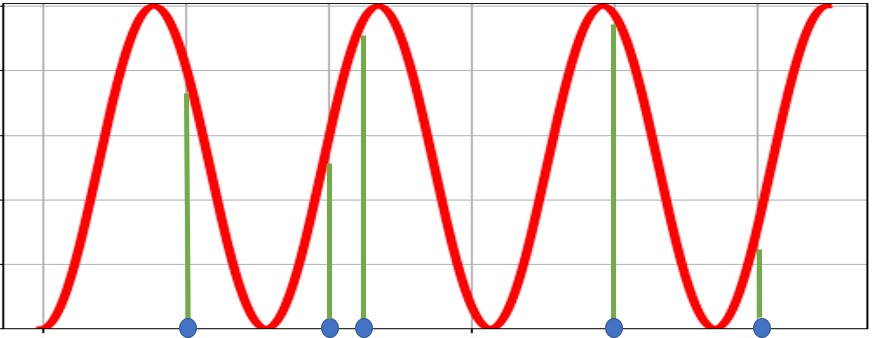}
\caption{\textbf{Sine fitting.} Given a set of integers $P$ (blue points on the $x$-axis), and $\sin^2(\cdot)$ wave (the red signal), then the cost of the \fft problem with respect to this input, is the sum of vertical distances between the points in $P$ (on the x-axis) and the sine signal (the sum of lengths of the green lines). The goal is to find the sine signal that minimizes this sum.}\label{fig:ourcost}
\end{figure}
Formally speaking, the anomaly detection problem can be stated as follows. Given a large positive number $N$, a set  $P \subseteq \br{1,2,\cdots,N}$ of $n$ integers, the objective is to fit a sine signal, such that the sum of the vertical distances between each $p \in P$ (on the $x$-axis) and its corresponding point $\Sin{\frac{2\pi}{N}pc}$ on the signal, is minimized; see Figure~\ref{fig:ourcost}. Hence, we aim to solve the following problem that we call the \emph{\fft problem}: 
\begin{equation}
\label{eq:our_cost}
\begin{split}
\min_{c\in C}\sum\limits_{p \in P} \sin^2\term{\frac{2\pi}{N} pc} + \lambda(c),
\end{split}
\end{equation}
where $C$ is the set of feasible solutions, and $\lambda$ is a regularization function to put constraints on the solution.

The generalized form of the fitting problem above was first addressed by~\cite{souders1994ieee} and later generalized by~\cite{ramos2008new}, where proper implementation have been suggested over the years~\cite{da2003new, chen2015improved, renczes2016efficient, renczes2021computationally}. In addition, the \fft problem and its variants gained attention in recent years in solving various problems, e.g., estimating the shift phase between two signal with very high accuracy~\cite{queiros2010cross}, characterizing data acquisition channels and analog to digital converters~\cite{pintelon1996improved}, high-accuracy sampling measurements of complex voltage ratio of sinusoidal signals~\cite{augustyn2018improved}, etc.

\textbf{Data discretization.} In many applications, we aim to find a proper choice of floating-point grid. For example, when we are given points encoded in $64$bits, and we wish to use a $32$ floating point grid. A naive way to do so is by simply removing the most/least significant $32$ bits from each point. However such approach results in losing most of the underlying structure that these points form, which in turn leads to unnecessary data loss.  
Instead, arithmetic modulo or sine functions that incorporate cyclic properties are used, e.g.,~\cite{naumov2018periodic, nagel2020up, gholami2021survey}. Such functions aim towards retaining as much information as possible when information loss is inevitable. 
This task serves well in the field of quantization~\cite{gholami2021survey}, which is an active sub-field in deep learning models.


To solve this problem, we first find the sine wave that fits the input data using the cost function at~\eqref{eq:our_cost}. Then each point in the input data is projected to its nearest point from the set of roots of the signal that was obtained from the sine fitting operation; see Figure~\ref{fig:disc}.

All of the applications above, e.g., monitoring, anomaly detection, and data discretization, are problems that are reduced to an instance of the \fft problem. Although these problems are desirable, solving them on large-scale data is not an easy task, due to bounded computational power and memory. In addition, in the streaming (or distributed) setting where points are being received via a stream of data, fitting such functions requires new algorithms for handling such settings. To handle these challenges, we can use coresets. 

\subsection{Coresets} \label{sec:coresets}
Coreset was first suggested as a data summarization technique in the context of computational geometry~\citep{agarwal2004approximating}, and got increasing attention over recent years~\citep{broder2014scalable,nearconvex,huang2021novel,cohen2021improving,huang2020coresets,mirzasoleiman2020coresets}; for extensive surveys on coresets, we refer the reader to~\citep{feldman2020core, phillips2016coresets}, and~\cite{jubran2019introduction,maalouf2021introduction} for an introductory.

Informally speaking, a coreset is (usually) a small weighted subset of the original input set of points that approximates its loss for every feasible query $c$, up to a provable multiplicative error of $1 \pm \eps$, where $\eps \in (0, 1)$ is a given error parameter. 
Usually the goal is to have a coreset of size that is independent or near-logarithmic in the size of the input (number of points), in order to be able to store a data of the same structure (as the input) using small memory, and to obtain a faster time solutions (approximations) by running them on the coreset instead of the original data. Furthermore, the accuracy of existing (fast) heuristics can be improved by running them many times on the coreset in the time it takes for a single run on the original (big) dataset. 
Finally, since coresets are designed to approximate the cost of every feasible query, it can be used to solve constraint optimization problems, and to support streaming and distributed models; see details and more advantages of coresets in~\citep{feldman2020core}.


In the recent years, coresets were applied to improve many algorithms from different fields e.g. logistic regression~\citep{huggins2016coresets,munteanu2018coresets,karnin2019discrepancy,nearconvex}, matrix approximation~\citep{feldman2013turning, maalouf2019fast,feldman2010coresets,sarlos2006improved,maalouf2021coresets}, decision trees~\citep{jubran2021coresets}, clustering~\citep{feldman2011scalable,gu2012coreset,lucic2015strong,bachem2018one,jubran2020sets, schmidt2019fair}, $\ell_z$-regression~\citep{cohen2015lp, dasgupta2009sampling, sohler2011subspace}, \emph{SVM}~\citep{har2007maximum,tsang2006generalized,tsang2005core,tsang2005very,tukan2021coresets}, deep learning models~\citep{baykal2018data,maalouf2021unified,liebenwein2019provable,mussay2021data}, etc.

\textbf{Sensitivity sampling framework.} A unified framework for computing coresets to wide range family of problems was suggested in~\citep{braverman2016new}. It is based on non-uniform sampling, specifically, sensitivity sampling. 
Intuitively, the sensitivity of a point $p$ from the input set $P$ is a number $s(p)\in [0,1]$ that corresponds to the importance of this point with respect to the other points, and the specific cost function that we wish to approximate; see formal details in Theorem~\ref{thm:coreset}. 
The main goal of defining a sensitivity is that with high probability, a non-uniform sampling from $P$ based on these sensitivities yields a coreset, where each point $p$ is sampled i.i.d. with a probability that is proportional to $s(p)$, and assigned a (multiplicative) weight which is inversely proportional to $s(p)$. The size of the coreset is then proportional to (i) the total sum of these sensitivities $t=\sum_{p\in P}s(p)$, and (ii) the VC dimension of the problem at hand, which is (intuitively) a complexity measure. 
In recent years, many classical and hard machine learning problems~\citep{braverman2016new,sohler2018strong,maalouf2020tight} have been proved to have a total sensitivity (and VC dimension) that is near-logarithmic in or even independent of the input size $\abs{P}$. 


\begin{figure}[htb!]
\centering
\includegraphics[width=0.8\linewidth]{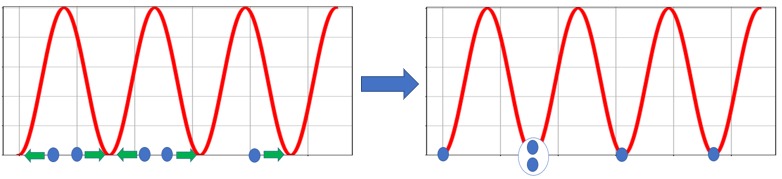}
\caption{\textbf{Discretization.} Given a set of points (blue points), we find a sine wave (red signal) that fits the input data. Then each input point is projected to its nearest point from the set of roots of the signal.}\label{fig:disc}
\end{figure}
\subsection{Our Contribution}
We summarize our contribution as follows.
\begin{enumerate}
\item[(i)] Theoretically, we prove that for every integer $N>1$, and every set $P\subseteq [N]$ of $n>1$ integers:
\begin{enumerate}
    \item The total sensitivity with respect to the \fft problem is bounded by $O(\log^4{N})$, and the VC dimension is bounded by $O(\log{(nN)})$; see Theorem~\ref{mainthm} and Claim~\ref{VcbOUND} respectively.
    
    \item For any approximation error $\eps>1$, there exists a coreset of size\footnote{$\Tilde{O}$ hide terms related to $\varepsilon$ (the approximation factor), and $\delta$ (probability of failure).} $\Tilde{O}\term{\log(N)^{O(1)}}$ (see Theorem~\ref{thm:mainthm} for full details) with respect to the \fft optimization problem.
    \end{enumerate}
    \item[(ii)] Experimental results on real world datasets and open source code~\citep{opencode} are provided.
\end{enumerate}

\section{PRELIMINARIES}
In this section we first give our notations that will be used throughout the paper. We then define the sensitivity of a point in the context of the \fft problem (see Definition~\ref{def:sens}), and formally write how it can be used to construct a coreset (see Theorem~\ref{thm:coreset}). Finally we state the main goal of the paper.

\textbf{Notations.} Let $\INT$ denote the set of all positive integers, $[n] = \br{1,\ldots, n}$ for every $n \in \INT$, and for every $x \in \REAL$ denote the rounding of $x$ to its nearest integer by $\round{x}$ (e.g. $\round{3.2}=3$).

We now formally define the sensitivity of a point $p\in P$ in the context of the \fft problem.

\begin{definition}[\fft sensitivity]\label{def:sens}
Let $N>1$ be a positive integer, and let $P \subseteq [N]$ be a set of $n>1$ integers. For every $p \in P$, the \emph{sensitivity} of $p$ is defined as
$
\max_{c\in [N]}\frac{\sin^2(pc\cdot \frac{2\pi}{N})}{\sum_{q\in P}\sin^2(qc\cdot \frac{2\pi}{N})}.
$
\end{definition}

The following theorem formally describes how to construct an $\eps$-coreset via the sensitivity framework. We restate it from~\cite{braverman2016new} and modify it to be specific for our cost function.

\begin{theorem}\label{thm:coreset}
Let $N>1$ be a positive integer, and let $P \subseteq [N]$ be a set of $n>1$ integers. Let $s: P \to [0,1]$ be a function such that $s(p)$ is an upper bound on the sensitivity of $p$ (see Definition~\ref{def:sens}).
Let $t = \sum_{p \in P} s(p)$ and $d'$ be the~\emph{VC dimension} of the \fft problem; see Definition~\ref{def:dimension}. 
Let $\eps, \delta \in (0,1)$, and let $S$ be a random sample of $\abs{S} \in O\term{\frac{t}{\varepsilon^2}\left(d'\log{t}+\log{\frac{1}{\delta}}\right)}$ i.i.d points from $P$, where every $p \in P$ is sampled with probability $s(p)/t$. Let $v(p) = \frac{t}{s(p)\abs{S}}$ for every $p \in S$. Then with probability at least $1-\delta$, we have that for every $c\in [N]$, we have
$\abs{1- \frac{\sum_{p\in S}v(p)\sin^2(pc\cdot \frac{2\pi} {N})}{\sum_{p\in P}\sin^2(pc\cdot \frac{2\pi}{N})}} \leq \eps .$
\end{theorem}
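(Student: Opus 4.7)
\bigskip

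\noindent\textbf{Proof proposal for Theorem~\ref{thm:coreset}.} The statement is the specialization of the generic sensitivity sampling framework (Braverman et al., 2016) to the \fft cost, so my plan is to reduce it to the standard framework by verifying its hypotheses rather than reinventing the concentration machinery. Concretely, I would fix a query $c\in[N]$, define the per-point cost $f_c(p) = \sin^2(pc\cdot 2\pi/N)$ and its normalization $\bar f_c(p) = f_c(p)/\sum_{q\in P}f_c(q)$ (handling the degenerate case $\sum_q f_c(q)=0$ separately, since then both numerator and denominator of the claimed ratio vanish). Observe that $\sum_{p\in P}\bar f_c(p)=1$ and that Definition~\ref{def:sens} gives the pointwise bound $\bar f_c(p)\le s(p)$ for every $c$ and every $p$.

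Next, for each sample $p\in S$ I would consider the random variable $Y_c(p) = v(p)f_c(p) / \sum_{q\in P}f_c(q) = \frac{t}{|S|}\cdot \bar f_c(p)/s(p)$. A direct computation shows $\mathbb{E}[Y_c(p)] = \frac{1}{|S|}\sum_{p\in P}\bar f_c(p) = 1/|S|$, so the estimator $\hat Z_c = \sum_{p\in S} Y_c(p)$ is unbiased with mean $1$. Crucially, the sensitivity bound implies $Y_c(p)\le t/|S|$ almost surely, giving a uniform envelope on the per-sample contribution. This is exactly the setup in which Hoeffding's inequality yields, for any fixed $c$, that $|\hat Z_c - 1|\le \varepsilon$ with probability at least $1-\delta'$ provided $|S| = \Omega\!\left(\frac{t}{\varepsilon^2}\log\frac{1}{\delta'}\right)$.

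The main obstacle is turning this pointwise guarantee into a uniform one over all $c\in[N]$, which is where the VC dimension $d'$ enters. A naive union bound over $c\in[N]$ would cost $\log N$; the framework instead works with the range space induced by the sublevel sets of the normalized functions $\{Y_c : c\in[N]\}$ and replaces the union bound by a uniform $\varepsilon$-approximation result for range spaces of VC dimension $d'$. Standard $\varepsilon$-approximation theorems (see e.g.\ the reductions used in Braverman et al.) then show that a sample of size $|S|\in O\!\left(\frac{t}{\varepsilon^2}(d'\log t+\log(1/\delta))\right)$ simultaneously $\varepsilon$-approximates the integral of every $Y_c$, hence $|\hat Z_c-1|\le \varepsilon$ for all $c\in[N]$ with probability $\ge 1-\delta$. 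Multiplying back by $\sum_q f_c(q)$ yields the claimed inequality.

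In summary, the work breaks into: (i) check unbiasedness of the weighted estimator, which is a one-line calculation; (ii) derive the almost sure bound $Y_c(p)\le t/|S|$ from Definition~\ref{def:sens}, which is where the sensitivity definition is used; (iii) invoke the VC-based uniform convergence theorem for range spaces of dimension $d'$ defined by the sine-query class (the dimension bound itself is supplied by Claim~\ref{VcbOUND} and Definition~\ref{def:dimension}, which the excerpt promises for later). Step (iii) is the only non-mechanical step, and it is handled entirely by quoting the generic sensitivity sampling theorem; thus the proof reduces to writing down steps (i)--(ii) and citing the framework.
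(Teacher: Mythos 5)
Your proposal is correct and takes essentially the same route as the paper: the paper itself does not prove this theorem, it simply restates it from Braverman et al.\ (2016) specialized to the $\sin^2$ cost, which is exactly the reduction you describe. Your steps (i)--(iii) — unbiasedness, the envelope $Y_c(p)\le t/|S|$ coming from $\bar f_c(p)\le s(p)$, and the VC-based uniform convergence argument supplying the $d'\log t$ term — are the right hypotheses to verify before quoting the framework; the paper leaves them implicit.
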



\textbf{Problem statement.} Theorem~\ref{thm:coreset} raises the following question: Can we bound the the total sensitivity and the VC dimension of the \fft problem in order to obtain small coresets?

Note that, the emphasis of this work is on the size of the coreset that is needed (required memory) to approximate the \fft cost function.




\section{CORESET FOR SINE FITTING}
In this section we state and prove our main result. For brevity purposes, some proofs of the technical results have been omitted from this manuscript; we refer the reader to the supplementary material for these proofs. 

Note that since the regularization function $\lambda$ at~\eqref{eq:our_cost} is independent of $P$, a $1\pm\eps$ multiplicative approximation of the $\sin^2(\cdot)$ terms at~\eqref{eq:our_cost}, yields a $1\pm\eps$ multiplicative approximation for the whole term in~\eqref{eq:our_cost}.

The following theorem summarizes our main result.
\begin{theorem}[Main result: coreset for the \fft problem]\label{thm:mainthm}
Let $N>1$ be a positive integer, $P \subseteq [N]$ be a set of $n>1$ integers, and let $\eps,\delta \in (0,1)$.
Then, we can compute a pair $(S,v)$, where $S \subseteq P$, and $v:S\to [0,\infty)$, such that
\begin{enumerate}
    \item the size of $S$ is  polylogarithmic in $N$ and logarithmic in $n$, i.e.,  $$|S|\in  O\left(\frac{\log^4{N}}{\varepsilon^2}\left(\log(nN)\log{(\log{N})}+\log{\frac{1}{\delta}}\right)\right).$$
    \item with probability at least $1-\delta$, for every $c\in [N]$,
    $$\abs{1 -\frac{\sum_{p\in S}v(p)\sin^2(pc\cdot \frac{2\pi} {N})}{\sum_{p\in P}\sin^2(pc\cdot \frac{2\pi}{N})}} \leq \eps. $$
\end{enumerate}
\end{theorem}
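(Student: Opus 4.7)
The plan is to apply the sensitivity-sampling framework of Theorem~\ref{thm:coreset}, which takes as input (a) an assignment of sensitivity upper bounds $s:P\to[0,1]$ with small total $t=\sum_{p\in P}s(p)$, and (b) a bound $d'$ on the VC dimension of the associated range space. Once both are in place, the theorem produces, with probability at least $1-\delta$, a weighted subset $(S,v)$ of size $O(t\varepsilon^{-2}(d'\log t+\log(1/\delta)))$ satisfying the multiplicative guarantee in item~(2). Thus the whole argument reduces to supplying these two ingredients and a routine substitution.

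The main obstacle is establishing the total sensitivity bound $t=O(\log^4 N)$, which is the content of Theorem~\ref{mainthm} and which I invoke as a black box here. The subtlety is that $\sin^2(qc\cdot 2\pi/N)$ can vanish simultaneously for many $q\in P$ (namely whenever $2qc$ is a multiple of $N$), so a query $c$ sharing a large common factor with $N$ can annihilate most terms of the denominator in Definition~\ref{def:sens} and let a single surviving point have sensitivity close to $1$. The natural strategy is to stratify the query space $[N]$ according to $d=\gcd(c,N)$, use the elementary estimate $\sin^2(\pi x)\asymp\|x\|^2$ where $\|x\|$ is the distance to the nearest integer, and control each stratum by an anti-concentration statement about the residues $\{pc\bmod N:p\in P\}$. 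Because only $O(\log N)$ dyadic scales of $d$ are relevant and the $\|\cdot\|^2$ factor contributes a further quadratic in $\log N$, one expects exactly the $\log^4 N$ exponent.

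For the VC dimension, the plan is to exploit that the query domain is the finite set $[N]$. The range space associated with the sine-fitting cost has ranges of the form $\{p\in P:\sin^2(pc\cdot 2\pi/N)\ge r\}$ parametrized by $(c,r)\in[N]\times[0,1]$. Since $c$ takes at most $N$ values and, for each fixed $c$, only $O(n)$ thresholds produce distinct subsets of $P$, the total number of distinct ranges is $O(nN)$; a standard Sauer--Shelah argument then gives $d'=O(\log(nN))$, which is Claim~\ref{VcbOUND}.

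Finally, plugging $t=O(\log^4 N)$ and $d'=O(\log(nN))$ into the size formula of Theorem~\ref{thm:coreset}, and using $\log t=O(\log\log N)$, the required sample size becomes
\[
O\!\left(\frac{\log^4 N}{\varepsilon^2}\left(\log(nN)\log\log N+\log\tfrac{1}{\delta}\right)\right),
\]
matching item~(1); item~(2) is immediate from the multiplicative guarantee of Theorem~\ref{thm:coreset} applied to the sensitivity upper bounds yielding this $t$.
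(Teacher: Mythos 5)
Your proof of Theorem~\ref{thm:mainthm} follows the same route as the paper: invoke the sensitivity-sampling framework of Theorem~\ref{thm:coreset}, feed in the total-sensitivity bound $t=O(\log^4 N)$ from Theorem~\ref{mainthm} and the VC-dimension bound $d'=O(\log(nN))$ from Lemma~\ref{VcbOUND}, and substitute into the sample-size formula with $\log t=O(\log\log N)$; your VC-dimension sketch (at most $N$ choices of $c$, at most $n$ nested thresholds per $c$, hence at most $nN$ distinct ranges and dimension $\le\log(nN)$) matches the paper's proof of Lemma~\ref{VcbOUND}. One small remark: the gcd-stratification / anti-concentration heuristic you offer as motivation for the $O(\log^4 N)$ total sensitivity is \emph{not} how the paper actually proves Theorem~\ref{mainthm} --- the paper instead reduces queries to the window $C(p)$ via Lemma~\ref{lem:sensitivity_query_bound}, bounds each sensitivity by $O(\log^2 N)/g(p,P)$ in Lemma~\ref{lem:bounding_c(p)}, and then controls $\sum_p 1/g(p,P)$ by a directed-graph independent-set argument (Claims~\ref{three}--\ref{clm:bound_Q}) --- but since you explicitly black-box that theorem, this does not affect the correctness of your argument for Theorem~\ref{thm:mainthm}.
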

To prove Theorem~\ref{thm:mainthm}, we need to bound the total sensitivity (as done in Section~\ref{sec:sensbound}) and the VC dimension (see Section~\ref{sec:vcbound}) of the \fft problem. 

\subsection{Bound On The Total Sensitivity}\label{sec:sensbound}
In this section we show that the total sensitivity of the \fft problem is small and bounded. Formally speaking,
\begin{theorem}\label{mainthm}
Let $N\geq 1$ and $P\subseteq [N]$. Then
\[
\sum_{p\in P}\max_{c\in [N]}\frac{\sin^2(pc\cdot \frac{2\pi}{N})}{\sum_{q\in P}\sin^2(qc\cdot \frac{2\pi}{N})}\in O(\log^4 N).
\]
\end{theorem}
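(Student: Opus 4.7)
My approach would be a multi-scale dyadic decomposition combined with a number-theoretic reduction to smaller moduli. First, since $\sin^2(2\pi pc/N)=\sin^2(\pi pc/M)$ with $M:=N/2$, I would work on $\mathbb{Z}/M$ with the function $f(x):=\sin^2(\pi x/M)$, which satisfies the two-sided estimate $f(x)\asymp \min(1,(|x|_M/M)^2)$, where $|x|_M:=\min(x\bmod M,\,M-x\bmod M)$. Writing $T_c:=\sum_{q\in P}f(qc\bmod M)$, the quantity to bound is $\sum_p \max_c f(pc)/T_c$.

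Next, I would introduce the dyadic shells $A_k:=\{x\in\mathbb{Z}/M:|x|_M\in[2^{k-1},2^k)\}$ for $k=0,1,\ldots,\lceil\log_2 M\rceil$, so that $|A_k|\le 2^{k+1}$ and $f(x)\asymp 4^k/M^2$ on $A_k$ (saturating at the top scale). Given $c$, define $L_k(c):=\{q\in P:qc\bmod M\in A_k\}$; then $T_c\asymp \sum_k|L_k(c)|\,4^k/M^2$. If $pc\in A_j$ and $k^\star(c)$ denotes the maximizer of $k\mapsto |L_k(c)|\,4^k$, then the key per-point estimate
\[
\frac{f(pc)}{T_c}\ \lesssim\ \frac{4^{j-k^\star(c)}}{|L_{k^\star(c)}(c)|}\ \le\ \frac{1}{|L_j(c)|}
\]
follows from the dominant-level bound $T_c\gtrsim |L_{k^\star(c)}(c)|\,4^{k^\star(c)}/M^2$ together with the maximality of $k^\star$.

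To sum these bounds over $p$, I would partition $[N]$ according to $d:=\gcd(c,M)$; on $C_d:=\{c\in[N]:\gcd(c,M)=d\}$ one has $c=(M/d)c'$ with $\gcd(c',M/d)=1$, so $f(qc)$ depends only on $q\bmod(M/d)$, reducing the analysis on $C_d$ to sine fitting at the smaller modulus $M/d$ with a coprime query. Then I would charge every $p$ to a witness triple $(d,\ell,j)$ where $d$ records $\gcd(c_p,M)$, $j$ records the shell of $pc_p$, and $\ell=k^\star(c_p)$. Four sources of $\log N$ appear when aggregating: (i) a dyadic grouping of the divisors $d$ of $M$ by size, (ii) the dyadic scales $j$ of $pc$, (iii) the dyadic scales $\ell$ of the dominant level of $T_c$, and (iv) a counting step showing that for each fixed $p$ only $O(2^j)$ values of $c$ place $pc$ in the shell $A_j$. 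Combining these four logarithmic layers produces the claimed $O(\log^4 N)$ bound.

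The main obstacle will be the last aggregation. Different $p$'s may share the same witness $c$, and $|L_{k^\star(c)}(c)|$ shrinks precisely when $c$ is arithmetically special (a divisor of $M$ scaled by a small cofactor); a naive union bound over all $c$ loses control. Showing that such ``bad'' $c$'s are sparse enough to balance against the small denominator $|L_{k^\star(c)}(c)|$, so that the double sum over points and witnesses telescopes across the dyadic scales without over-counting, is the technical heart of the argument and is where careful Möbius-style bookkeeping across divisors of $M$ enters.
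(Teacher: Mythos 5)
Your multi-scale dyadic decomposition and the per-point estimate $f(pc)/T_c\lesssim 1/|L_j(c)|$ are correct and close in spirit to the paper's Lemma~\ref{lem:bounding_c(p)} (which uses a single hard threshold at $N/(16\log N)$ instead of dyadic shells, at the cost of a fixed $\log^2 N$ factor). The route then diverges sharply: you propose a $\gcd$-stratification of queries and a Möbius-style charging over divisors of $M$, whereas the paper first restricts, via Lemma~\ref{lem:sensitivity_query_bound}, to queries $c\in C(p)$ with $cp \bmod (N/2)\in[N/8,3N/8]$ (a constant-factor loss), reducing everything to the quantity $g(p,P)$ that counts how many $q\in P$ also land in a central window under the worst such $c$.

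The genuine gap is exactly the one you flag at the end: you never show how to control the aggregated sum $\sum_p 1/|L_{j(p)}(c_p)|$ when the maximizing queries $c_p$ vary with $p$ and can be arithmetically special. The paper closes this hole with two ideas that do not appear in your sketch. First (Claim~\ref{clm:bound_gPp} via Claim~\ref{three}), it encodes the problem as a directed graph on $P$ with out-degree $g(p,P)-1$ and extracts, via a Turán/degree-bucketing argument, an independent set $Q$ with $\sum_{p\in P}1/g(p,P)\in |Q|\cdot\Theta(\log N)$; the independence condition says that for each $q\in Q$ there is a query that isolates $q$ from all other members of $Q$. Second (Claim~\ref{clm:bound_Q}), it bounds $|Q|\le\log N$ by a subset-sum pigeonhole: if $|Q|>\log N$ then two distinct subsets of the residues $q\bmod (N/2)$ have equal sums, and unwinding this equality forces some $cp\bmod (N/2)$ outside $[N/8,3N/8]$, contradicting $c\in C(p)$. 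These additive-combinatorial steps are the ones doing the heavy lifting that your ``Möbius-style bookkeeping across divisors'' would need to replicate; as written, there is no argument that the bad queries are sparse, and the divisor count of $M$ is not $O(\log N)$ in general, so the proposed fourth log source is not established. Until you supply a replacement for the independent-set/pigeonhole pair, the aggregation step is missing.
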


We prove Theorem~\ref{mainthm} by combining multiple claims and lemmas. We first state the following as a tool to use the cyclic property of the sine function. 

\begin{claim}
\label{clm:ax}
Let $a,b \in \INT$ be a pair of positive integers. Then for every $x \in \INT$, 
\[
\abs{\sin{\term{\frac{b\pi}{a} x}}} = \abs{\sin{\term{\frac{b\pi}{a} \term{\mod{x}[a]}}}}.
\]
\end{claim}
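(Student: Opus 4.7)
The plan is a direct application of the periodicity of $|\sin|$, which has period $\pi$.

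First, I would use the division algorithm to write $x = qa + r$ where $q \in \INT$ (or more precisely a nonnegative integer) and $r = \mod{x}[a] \in \{0, 1, \ldots, a-1\}$. Substituting into the argument of sine gives
\[
\frac{b\pi}{a} x \;=\; \frac{b\pi}{a}\bigl(qa + r\bigr) \;=\; bq\pi + \frac{b\pi}{a} r.
\]

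Next, I would invoke the identity $\sin(y + k\pi) = (-1)^k \sin(y)$, valid for every $y \in \REAL$ and every $k \in \INT$. Applied with $y = \frac{b\pi}{a} r$ and $k = bq \in \INT$ (since both $b$ and $q$ are integers), this gives
\[
\sin\!\term{\frac{b\pi}{a} x} \;=\; (-1)^{bq}\sin\!\term{\frac{b\pi}{a} r}.
\]
Taking absolute values kills the sign factor $(-1)^{bq}$ and yields $\abs{\sin(\tfrac{b\pi}{a} x)} = \abs{\sin(\tfrac{b\pi}{a} r)} = \abs{\sin(\tfrac{b\pi}{a}(\mod{x}[a]))}$, which is exactly the claim.

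There is no real obstacle here; the only subtlety is that it is essential that $b$ be an \emph{integer}, since the reduction relies on $\pi$-periodicity of $|\sin|$ and hence on $bq\pi$ being an integer multiple of $\pi$. The claim would fail if $b$ were an arbitrary real. Accordingly, in the write-up I would briefly emphasize the integrality of $b$ (and of $q$) when invoking the identity $\sin(y+k\pi) = (-1)^k\sin(y)$.
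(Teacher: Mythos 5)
Your proposal is correct and follows essentially the same route as the paper: both decompose $x$ as $a\cdot\lfloor x/a\rfloor + (\mod{x}[a])$, substitute into the argument of $\sin$, and exploit that $\frac{b\pi}{a}\cdot a\lfloor x/a\rfloor = b\lfloor x/a\rfloor\pi$ is an integer multiple of $\pi$. The only cosmetic difference is that the paper expands $\sin(A+B)$ with the addition formula and then notes $\sin(b\lfloor x/a\rfloor\pi)=0$, $|\cos(b\lfloor x/a\rfloor\pi)|=1$, whereas you invoke the equivalent shortcut $\sin(y+k\pi)=(-1)^k\sin y$ directly.
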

We now proceed to prove that one doesn't need to go over all the possible integers in $[N]$ to compute a bound on the sensitivity of each $p \in P$, but rather a smaller compact subset of $[N]$ is sufficient.

\begin{lemma}\label{lem:sensitivity_query_bound}
Let $P \subseteq [N]$ be a set of $n$ integer points. For every $p \in P$, let 
$$C(p) = \br{c \in [N] \middle| \term{\mod{cp}} \in \left[\frac{N}{8}, \frac{3N}{8}\right]}.$$ Then for every $p \in P$,
\begin{equation}
\label{eq:sense_query_bound}
\max_{c\in [N]}\frac{\Sin{\frac{2\pi}{N}pc}}{\sum\limits_{q\in P}\Sin{\frac{2\pi}{N}qc}}\leq
4\cdot \max\limits_{c\in C(p)}\frac{\Sin{\frac{2\pi}{N}pc}}{\sum\limits_{q\in P}\Sin{\frac{2\pi}{N}qc}}.
\end{equation}
\end{lemma}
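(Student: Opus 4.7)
My plan is to prove Lemma~\ref{lem:sensitivity_query_bound} by a ``multiplicative substitution'' trick: starting from an argmax $c^*$ of the left-hand side, I exhibit some $c' \in C(p)$ of the form $c' = k_0 c^* \bmod N$ for a carefully chosen positive integer $k_0$, and show that the sensitivity ratio at $c'$ is at least $1/4$ of that at $c^*$. That immediately gives the claimed factor of $4$.

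First, the trivial reductions and setup. If the left-hand side equals zero the lemma is vacuous, so assume it is positive and let $c^*$ attain it; set $r^* := pc^* \bmod (N/2)$. If $r^* \in [N/8, 3N/8]$ then $c^* \in C(p)$ and we are done. Otherwise $r^* \in (0, N/8) \cup (3N/8, N/2)$, and the identity $\sin^2(2\pi r/N) = \sin^2(2\pi(N/2-r)/N)$, together with the symmetric behavior of $k_0 r^* \bmod (N/2)$ under $r^* \mapsto N/2 - r^*$, lets me reduce the second subcase to the first; so I assume $r^* \in (0, N/8)$. Now define $k_0 := \lceil N/(8r^*) \rceil \geq 2$, so $k_0 r^* \in [N/8, N/4)$, and set $c' := k_0 c^* \bmod N$. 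Claim~\ref{clm:ax} gives $pc' \bmod (N/2) = k_0 r^* \in [N/8, 3N/8]$, so $c' \in C(p)$ (in particular $c' \neq 0$), and $\sin^2(\frac{2\pi}{N} pc') \geq \sin^2(\pi/4) = 1/2$.

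The quantitative heart of the argument combines two inequalities. On the denominator, the elementary bound $|\sin(k_0 x)| \leq k_0 |\sin x|$ (proved by induction on $k_0$) applied termwise yields
\[
\sum_{q \in P} \sin^2\!\left(\tfrac{2\pi}{N} q c'\right) \leq k_0^2 \sum_{q \in P} \sin^2\!\left(\tfrac{2\pi}{N} q c^*\right).
\]
On the numerator, setting $u := \frac{2\pi r^*}{N} \in (0, \pi/4)$ and $v := k_0 u \in [\pi/4, \pi/2)$, the inequality $\sin u \leq u$ gives $k_0 \sin u \leq v$, while Jordan's inequality $\sin v \geq (2/\pi) v$ on $[0, \pi/2]$ gives $v \leq (\pi/2)\sin v$. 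Hence $(k_0 \sin u)^2 \leq (\pi^2/4)\sin^2 v$, and dividing the ratio at $c'$ by the ratio at $c^*$ gives
\[
\frac{\text{ratio at } c'}{\text{ratio at } c^*} \;\geq\; \frac{\sin^2 v}{k_0^2 \sin^2 u} \;\geq\; \frac{4}{\pi^2} \;>\; \frac{1}{4},
\]
which completes the proof.

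The main obstacle I foresee is the numerical constant: the lemma's factor of $4$ is loose (the argument actually yields $\pi^2/4 \approx 2.47$), but landing strictly below $4$ requires the right calibration of $k_0$ --- placing $k_0 r^*$ just past $N/8$ so that $v$ stays in $[\pi/4, \pi/2)$ where Jordan's inequality is tight enough. A secondary technicality is the symmetry reduction for $r^* \in (3N/8, N/2)$, where one must verify that substituting $r^* \mapsto N/2 - r^*$ preserves the relevant numerator value exactly, while leaving the termwise $k_0^2$ denominator bound oblivious to the reflection.
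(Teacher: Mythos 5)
Your proposal is correct, and the high-level route matches the paper's: let $c^*$ attain the maximum, handle the two bad regimes $r^* \in (0,N/8)$ and $r^* \in (3N/8,N/2)$ by symmetry, and in the first regime multiply $c^*$ by an integer $k_0$ chosen so that $k_0 r^*$ lands in $[N/8,N/4)\subset[N/8,3N/8]$, then compare numerators and denominators. The difference --- and it is worth noting --- is in the technical inequality used on the denominator. You push $\sum_q \sin^2(\frac{2\pi}{N}qc')$ upward via the universally valid induction $|\sin(kx)|\le k|\sin x|$, so no hypothesis on $\mod{c^*q}$ for $q\neq p$ is needed. The paper's denominator step instead applies the lower bound $|\sin x|\ge x/2$ term by term to $\sin^2(\frac{2\pi}{N}(\mod{c^*q}))$, but that bound requires the argument to lie in $[0,\pi/2]$, and $\frac{2\pi}{N}(\mod{c^*q})$ can range up to $\pi$; the cited "assumption of Case (i)" controls only $\mod{c^*p}$, not $\mod{c^*q}$. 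Your route sidesteps this and, paired with Jordan's inequality $\sin v \ge (2/\pi)v$ on $[0,\pi/2]$ in the numerator, actually yields the stated constant $4$ (indeed $\pi^2/4 < 4$), whereas the paper's accounting arrives at $16$. The only place I would tighten your write-up is the reduction of the second regime: the crispest statement is simply to replace $c^*$ by $N-c^*$, which fixes every $\sin^2$ value and sends $\mod{c^*p}$ to $N/2 - r^*\in(0,N/8)$, after which Case (i) applies verbatim; your "reflection of $r^*$" phrasing arrives at the same place but is easier to get wrong when checking that $c'\in C(p)$.
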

\begin{proof}
Put $p \in P$, and let $c^*\in[N]$ be an integer that maximizes the left hand side of~\eqref{eq:sense_query_bound} with respect to $p$, i.e.,
\begin{equation}
\label{maxx}
\max_{c\in [N]}\frac{\Sin{pc\cdot \frac{2\pi}{N}}}{\sum_{q\in P}\Sin{qc\cdot \frac{2\pi}{N}}}=\frac{\Sin{pc^*\cdot \frac{2\pi}{N}}}{\sum_{q\in P}\Sin{qc^*\cdot \frac{2\pi}{N}}}.
\end{equation}
If $c^*\in C(p)$ the claim trivially holds. Otherwise, we have $c^* \in [N]\setminus C(p)$,
and we prove the claim using case analysis: \begin{enumerate*}[label=Case~(\roman*)]
    \item $(\mod{c^*p})\in \left[0,\frac{N}{8}\right)$, \label{case:1} and
    \item $(\mod{c^*p})\in \left(\frac{N}{2}-\frac{N}{8},\frac{N}{2}\right]$\label{case:2}.
\end{enumerate*}

\noindent\textbf{\ref{case:1}:} Let $b_2 \in [8]\setminus[3]$ be an integer, and let $z=\left\lceil \frac{N/b_2}{\mod{c^*p}}\right\rceil$.
We first observe that 
\begin{equation}
\label{eq:equality_z}
\begin{split}
z &= \left\lceil \frac{\frac{N}{b_2}}{\mod{c^*p}}\right\rceil\\
&=\frac{\frac{N}{b_2}}{\mod{c^*p}} + \frac{\mod{\term{-\frac{N}{b_2}}}[\term{\mod{c^*p}}]}{\mod{c^*p}},
\end{split}
\end{equation}
where the second equality hold by properties of the ceiling function. We further observe that,
\begin{equation}
\label{eq:bound_1_case_1}
\begin{split}
&z\term{\mod{c^*p}}\\
&= \frac{N}{b_2} + \mod{\term{-\frac{N}{b_2}}}[\term{\mod{c^*p}}]\\
&\in \left[\frac{N}{b_2}, \frac{N}{8} + \frac{N}{b_2}\right],
\end{split}
\end{equation}
where the first equality holds by expanding $z$ using~\eqref{eq:equality_z}, and the last inclusion holds by the assumption of~\ref{case:1}. Since $\left[\frac{N}{b_2}, \frac{N}{8} + \frac{N}{b_2}\right]$ is entirely included in $\left[\frac{N}{8}, \frac{3N}{8}\right]$, then it holds that $z\term{\mod{c^*p}} \in C(p)$. Similarly, one can show that $\mod{\term{zc^*p}} \in \left[\frac{N}{b_2}, \frac{N}{8} + \frac{N}{b_2}\right]$, which means that $zc^* \in C(p)$.

We now proceed to show that the sensitivity can be bounded using some point in $C(p)$. Since for every $x \in \left[0, \frac{\pi}{2}\right]$, $\sin{x} \leq x \leq 2\abs{\sin{x}}$, then it holds that 
\begin{equation}
\label{eq:bound_2_case_1}
\begin{split}
&\Sin{\frac{2\pi}{N}pc^*} = \Sin{\frac{2\pi}{N} \term{\mod{c^*p}}}\\
&\leq \term{\frac{2\pi}{N} \term{\mod{c^*p}}}^2\\
&= \term{\frac{2\pi}{N} z\term{\mod{c^*p}}}^2 \frac{1}{z^2}\\
&\leq \frac{4}{z^2} \Sin{\frac{2\pi}{N} z\term{\mod{c^*p}}} \\
&= \frac{4}{z^2} \Sin{\frac{2\pi}{N} zc^*p},
\end{split}
\end{equation}
where the first equality holds by plugging $a:=\frac{N}{2}$, $b := 1$ and $x := c^*p$ into Claim~\ref{clm:ax}, the first inequality holds since $\frac{2\pi}{N}\term{\mod{c^*p}} \in [0, \pi]$, the second equality holds by multiplying and dividing by $z$, the second inequality follows from combining the fact that $\frac{2\pi}{N} z\term{\mod{c^*p}} \leq \frac{\pi}{4}+\frac{\pi}{b_2} \leq \pi$ which is derived from~\eqref{eq:bound_1_case_1} and the observation that $2\abs{\sin{x}} \geq x$ for every $x \in \left[ 0, \frac{\pi}{2}\right]$, and the last equality holds by plugging $a:=\frac{N}{2}$, $b := z$ and $x := c^*p$ into Claim~\ref{clm:ax}.

In addition, it holds that for every $q \in P$
\begin{equation}
\label{eq:lower_bound_sin}
\begin{split}
&\Sin{\frac{2\pi}{N}\term{\mod{c^*q}}}\\
&\quad \geq \frac{1}{4} \term{\frac{2\pi}{N}\term{\mod{c^*q}}}^2\\
&\quad=\frac{\term{\frac{2z\pi}{N}\term{\mod{c^*q}}}^2}{4z^2}\\
&\quad\geq\frac{\Sin{\frac{2z\pi}{N} \term{\mod{c^*q}}}}{4z^2}
\\
&\quad=\frac{1}{4z^2}\Sin{\frac{2\pi}{N} zc^*q},
\end{split}
\end{equation}
where the first inequality holds by combining the assumption of~\ref{case:1} and the observation that $\abs{\sin{x}} \geq \frac{x}{2}$ for every $x \in \left[0, \frac{\pi}{2}\right]$, the first equality holds by multiplying and dividing by $z$, the second inequality holds by combining~\eqref{eq:bound_1_case_1} with the observation that $x \geq \abs{\sin{x}}$ for every $x \in [0, \pi)$ where in this context $x := \frac{2z\pi}{N}\term{\mod{c^*q}}$, and finally the last equality holds by plugging $a:= \frac{N}{2}$, $b:=z\pi$, and $x:= c^*q$ into Claim~\ref{clm:ax}.

Combining~\eqref{maxx},~\eqref{eq:bound_1_case_1},~\eqref{eq:bound_2_case_1} and~\eqref{eq:lower_bound_sin} yields that
\begin{equation*}
\begin{split}
&\max_{c \in [N]}\frac{\Sin{\frac{2\pi}{N}pc}}{\sum\limits_{q \in P}\Sin{\frac{2\pi}{N}qc}}\\ 
&\leq \max_{c \in [N]} \frac{\frac{16}{z^2} \Sin{\frac{2\pi z}{N}\term{\mod{pc}}}}{\frac{1}{z^2}\sum\limits_{q\in P} \Sin{\frac{2\pi z}{N}\term{\mod{qc}}}}\\
&= \max_{c \in [N]} \frac{16\Sin{\frac{2\pi }{N}zpc}}{\sum_{q\in P}\Sin{\frac{2\pi }{N}zqc}}
= \max_{\hat{c} \in C(p)} \frac{16\Sin{\frac{2\pi }{N}\hat{c}p}}{\sum_{q\in P}\Sin{\frac{2\pi }{N}\hat{c}q}},
\end{split}
\end{equation*}
where last equality holds from combining~\eqref{maxx} and $zc^* \in C(p)$.

\noindent\textbf{\ref{case:2}:} Let $c^\prime=N-c^*$, and note that $c^\prime\in[N]$. For every $q\in P$,
\[
\abs{\Sin{c^\prime q\cdot 2\pi/N}[]} =\abs{\Sin{c^*q\cdot 2\pi/N}[]}.
\]
We observe that
\begin{align*}
&(\mod{c^\prime p}) = \mod{(N - c^*)p}\\
&\quad\quad= \mod{\left(\mod{Np} + \mod{(-c^*p)}\right)}\\
&\quad\quad=\mod{\left( 0 + \mod{(-c^*p)} \right)} \\
&\quad\quad= \mod{(-c^*p)}\\
&\quad\quad \leq \mod{-\left( N/2 - N/8 \right)}\\
&\quad\quad= \mod{N/8} - \mod{N} = N/8.
\end{align*}
Hence, the proof of Claim~\ref{lem:sensitivity_query_bound} in~\ref{case:2} follows by replacing $c^*$ with $c'$ in~\ref{case:1}.
\end{proof}



In what follows, we show that the sensitivity of each point $p \in P$ is bounded from above by a factor that is proportionally  polylogarithmic in $N$ and inversely linear in the number of points $q \in P$ that are not that far from $p$ in terms of arithmetic modulo.

\begin{lemma}
\label{lem:bounding_c(p)}
Let $C(p)$ be as in Lemma~\ref{lem:sensitivity_query_bound} for every $p \in P$, and let
\begin{equation}
\label{eq:gpP}
\begin{split}
g(p,P)&=\min_{c\in C(p)}\left|\left\lbrace q\in P:  (\mod{cq})\in \right.\right.\\
&\left.\left.\left[\frac{N}{16\log N}, \frac{N}{2}-\frac{N}{16\log N}\right] \right\rbrace\right|.
\end{split}
\end{equation}
Then for every $p \in P$,
\[
\max_{c\in C(p)}\frac{\Sin{pc\cdot \frac{2\pi}{N}}}{\sum_{q\in P}\Sin{qc\cdot \frac{2\pi}{N}}} \in O(\log^2 N)\cdot \frac{1}{g(p, P)}.
\]
\end{lemma}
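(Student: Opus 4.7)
The plan is to bound the numerator trivially by $1$ and obtain a polylogarithmic lower bound on the denominator for every $c \in C(p)$. Specifically, I will show that for any $c \in C(p)$, the sum $\sum_{q\in P}\sin^2(qc\cdot 2\pi/N)$ is at least $g(p,P)/\Theta(\log^2 N)$, which immediately implies the claim since the numerator is at most $1$.

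First, I would use Claim~\ref{clm:ax} with $a=N/2$ and $b=1$ to rewrite $\abs{\sin(2\pi qc/N)} = \abs{\sin(2\pi (\mod{qc})/N)}$ for any $q$. This reduction is the only role of $\mathrm{mod}\ N/2$ in the definition of $g(p,P)$: it lets me replace $qc$ by the representative in $[0, N/2)$ of the same sine value. Call a point $q\in P$ \emph{good for $c$} if $(\mod{qc}) \in [N/(16\log N), N/2 - N/(16\log N)]$, and let $B_c$ be the set of good points. By definition of $g(p,P)$, for every $c \in C(p)$ we have $|B_c| \geq g(p,P)$.

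Next, I lower bound each contribution from a good point. For $q \in B_c$, the reduced argument $y := (2\pi/N)(\mod{qc})$ lies in $[\pi/(8\log N),\ \pi - \pi/(8\log N)]$. Because $\abs{\sin}$ is concave and symmetric around $\pi/2$ on $[0,\pi]$, it is minimized at the endpoints of this interval, so
\[
\abs{\sin(2\pi qc/N)} = \abs{\sin y} \geq \sin\!\left(\frac{\pi}{8\log N}\right) \geq \frac{1}{8\log N},
\]
using the standard bound $\sin x \geq (2/\pi)x$ for $x \in [0,\pi/2]$ (applied for $N$ large enough; the constant can absorb small $N$). Squaring gives $\sin^2(2\pi qc/N) \geq 1/(64\log^2 N)$ for every $q \in B_c$. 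Summing only over $q \in B_c$ yields
\[
\sum_{q \in P}\sin^2\!\left(\tfrac{2\pi}{N}qc\right) \;\geq\; \frac{|B_c|}{64\log^2 N} \;\geq\; \frac{g(p,P)}{64\log^2 N}.
\]

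Finally, since $\sin^2(2\pi pc/N) \leq 1$, for every $c \in C(p)$
\[
\frac{\sin^2(2\pi pc/N)}{\sum_{q\in P}\sin^2(2\pi qc/N)} \;\leq\; \frac{64\log^2 N}{g(p,P)},
\]
which is exactly the desired $O(\log^2 N)/g(p,P)$ bound. The only subtlety is the sine lower bound on good points, but this is just the standard linear bound $\sin x \gtrsim x$ combined with the symmetry of $\sin$ about $\pi/2$; no genuine obstacle arises, and the lemma follows by taking a maximum over $c \in C(p)$.
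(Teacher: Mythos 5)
Your proposal is correct and follows essentially the same approach as the paper: bound the numerator by $1$, and lower-bound the denominator by restricting the sum to points whose reduced argument lies in the interval $[N/(16\log N),\, N/2 - N/(16\log N)]$, each contributing at least $1/(64\log^2 N)$. In fact your write-up is slightly cleaner than the paper's, because you state the implication in the direction actually needed — membership in the modular interval forces $\sin^2(qc\cdot 2\pi/N) \geq 1/(64\log^2 N)$, hence the set of such $q$ is contained in the large-contribution set and its size is at least $g(p,P)$ — whereas the paper phrases the observation as the converse implication ($\sin^2 \geq 1/(8\log N)^2$ implies the modular bound), which by itself does not deliver $g(p,P) \leq |P'|$.
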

\begin{proof}
Put $p \in P$, $c\in C(p)$, and let $P^\prime = \br{q \in P : \Sin{qc\cdot \frac{2\pi}{N}} \geq 1/(8\log{N})^2}$. 

First we observe that for every $q \in P$ such that $\Sin{qc\cdot \frac{2\pi}{N}} \geq 1/(8\log{N})^2$, it is implied that $\mod{(qc)} \geq \frac{N}{16\log{N}}$. By the cyclic property of $\sin$, it holds that $\mod{(qc)} \leq \frac{N}{2} - \frac{N}{16\log{N}}$.

Combining the above with the fact that $\Sin{pc\cdot \frac{2\pi}{N}}\leq 1$, yields that
\[
\begin{split}
&\frac{\Sin{pc\cdot \frac{2\pi}{N}}}{\sum_{q\in P}\Sin{qc\cdot \frac{2\pi}{N}}} \\
& \quad \leq\frac{1}{\sum_{q\in P}\Sin{qc\cdot \frac{2\pi}{N}}} \leq \frac{1}{\sum_{q\in P^\prime}\Sin{qc\cdot \frac{2\pi}{N}}}\\
&\quad \leq \frac{1}{\abs{P^\prime}/\term{64\log^2N}}
\leq \frac{64\log^2N}{g(p,P)},
\end{split}
\]
where the second inequality follows from $P^\prime \subseteq P$, and the last derivation holds since $g(p,P) \leq \abs{P^\prime}$ which follows from~\eqref{eq:gpP}.
\end{proof}

The bound on the sensitivity of each point $p \in P$ (from the Lemma~\ref{lem:bounding_c(p)}) still requires us to go over all possible queries in $C(p)$ to obtain the closest points in $P$ to $p$. Instead of trying to bound the sensitivity of each point by a term that doesn't require evaluation over every query in $C(p)$, we will bound the total sensitivity in a term that is independent of $C(p)$ for every $p \in P$. This is done by reducing the problem to an instance of the expected size of independent set of vertices in a graph (see Claim~\ref{clm:bound_gPp}). First, we will use the following claim to obtain an independent set of size polylogarithmic in the number of vertices in any given directed graph. 

\begin{claim}\label{three}
Let $G$ be a directed graph with $n$ vertices. Let $d_i$ denote the out degree of the $i$th vertex, for $i=1,\cdots, n$. Then there is an independent set of vertices $Q$ in $G$ such that
$|Q| \in \frac{\Theta(1)}{\log N}\cdot \sum_{i=1}^n \frac{1}{d_i+1}.$
\end{claim}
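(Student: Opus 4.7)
The plan is a dyadic bucketing of the vertices by out-degree followed by a Caro--Wei (Tur\'an) estimate inside the heaviest bucket. The main obstacle is that in a directed graph we only control the out-degrees $d_i$: a vertex with $d_i=0$ can still be hit by arbitrarily many incoming arcs, so the undirected Caro--Wei bound $\alpha(G)\ge\sum 1/(d_i+1)$ does not apply directly to the (undirected projection of the) whole graph. The key observation that unblocks this is that once we restrict to vertices whose out-degrees all lie in a narrow dyadic range, a double-counting of arcs inside the class forces the in-degrees to be on average of the same order as the out-degree bound, restoring symmetry.

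Concretely, I would partition the vertex set into the $O(\log N)$ buckets $V_0=\{i:d_i=0\}$ and $V_k=\{i:2^{k-1}\le d_i<2^k\}$ for $k=1,\ldots,\lceil\log_2 n\rceil$, using $n\le N$ (the regime in which the claim is invoked in the rest of the paper). By pigeonhole, one bucket $V_{k^*}$ captures an $\Omega(1/\log N)$ fraction of the total weight, i.e.
\[
\sum_{i\in V_{k^*}}\tfrac{1}{d_i+1}\;\ge\;\frac{c}{\log N}\,\sum_{i=1}^n\tfrac{1}{d_i+1}
\]
for an absolute constant $c>0$. If $k^*=0$, then every arc leaves its tail, so $V_0$ contains no internal arc and $Q:=V_0$ is already independent and of size exactly $\sum_{i\in V_0}1/(d_i+1)$; we are done in that case.

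Otherwise, let $H$ be the undirected graph on vertex set $V_{k^*}$ whose edges are the pairs $\{u,v\}$ such that $(u,v)$ or $(v,u)$ is an arc of $G$; any independent set in $H$ is independent in $G$. Counting arcs of $G$ with both endpoints in $V_{k^*}$ gives
\[
|E(H)|\;\le\;\sum_{i\in V_{k^*}} d_i\;\le\;|V_{k^*}|\cdot 2^{k^*},
\]
so the average degree of $H$ is at most $2^{k^*+1}$, and Caro--Wei yields an independent set in $H$ of size $\alpha(H)\ge |V_{k^*}|/(2^{k^*+1}+1)$. Since every $i\in V_{k^*}$ has $d_i\ge 2^{k^*-1}$ we also have $\sum_{i\in V_{k^*}}1/(d_i+1)=O(|V_{k^*}|/2^{k^*})$, and combining the two estimates gives
\[
|Q|\;\ge\;\alpha(H)\;=\;\Omega\!\left(\sum_{i\in V_{k^*}}\tfrac{1}{d_i+1}\right)\;=\;\Omega\!\left(\frac{1}{\log N}\sum_{i=1}^n\tfrac{1}{d_i+1}\right),
\]
which is exactly the bound claimed. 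The only delicate step is the double-counting inside $V_{k^*}$; once that symmetry is established, the rest is Turán/Caro--Wei plus the pigeonhole loss of a $\log N$ factor from the dyadic split.
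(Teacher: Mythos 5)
Your proposal is correct and follows essentially the same route as the paper's: a dyadic partition of vertices by out-degree into $O(\log N)$ classes, a Tur\'an-type bound on the independence number within a class (you invoke Caro--Wei, the paper runs the equivalent random-sampling argument), and a pigeonhole step absorbing the $1/\log N$ loss. Your write-up is a bit more careful on two small points the paper glosses over — the $d_i=0$ bucket (whose vertices are trivially independent, and which the paper's ranges $[2^{j-1},2^j-1]$ do not actually cover at $j=0$) and the directed-vs-undirected mismatch that Caro--Wei would otherwise not handle — but the underlying argument is the same.
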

\begin{proof}
Let $V$ denote the set of vertices of $G$, and let $E$ denote the set of edges of $G$. Partition the vertices of $G$ into $O(\log n)$ induced sub-graphs, where each vertex in the $j$th sub-graph $H_j$ has out degree in $[2^{j-1},2^{j}-1]$ for any non-negative integer $j \leq \round{\log{\term{\max\limits_{i \in [n]} d_i}}}$. 
Let $H$ denote the sub-graph with the largest number of vertices.
Pick a random sample $S$ of $|V|^2/(2|E|)$ nodes from $V$.
The expected number of edges in the induced sub-graph of $H$ by $S$ is bounded by $|V|^2/(4|E|)$.
Let 
\begin{align*}
T= &\{v\mid (v,u) \text{ is an edge of the sub-graph of }H\\& \text{ induced by }S\}.  
\end{align*}
By Markov inequality, with probability at least $1/2$ we have $|T|\leq V^2/2$.
Assume that this event indeed holds.  Hence, the sub-graph of $H$ that is induced by $S\setminus T$ is an independent set of $H$ with $|S\setminus T|= |V|^2/2|E|-V^2/4|E|\geq V^2/4|E|$ nodes.
Since $|E| \in O\left(2^j \abs{V} \right)$ we have $|S_j\setminus T_j|=|V|/2^j\in\sum_{v\in V_j}\Theta(1)/d_v$.
Hence, $\sum_j \abs{S_j\setminus T_j} \in \sum_{v\in V}\Theta(1)/d_v$.
By the pigeonhole principle, there is $j$ such that $\abs{S_j\setminus T_j}$ can be bounded from below by $\sum_{v\in V}\Theta(1)/d_v/ O(\log n)$.
\end{proof}

\begin{claim}\label{clm:bound_gPp}
There is a set $Q\subseteq P$ such that $g(q,Q)=1$ for every $q\in Q$, and
$\sum_{p\in P}\frac{1}{g(p, P)}\in |Q|\cdot \Theta\term{\log N}.$
\end{claim}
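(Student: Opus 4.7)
The plan is to reduce Claim~\ref{clm:bound_gPp} to the graph-theoretic Claim~\ref{three} by encoding the modular ``closeness'' structure of $P$ into the out-edges of a directed graph. For each $p \in P$, I would pick a query $c_p \in C(p)$ attaining the minimum in the definition of $g(p,P)$, set
\[
I = \left[\tfrac{N}{16\log N},\ \tfrac{N}{2} - \tfrac{N}{16\log N}\right], \qquad S_p = \br{q \in P : \term{\mod{c_p q}} \in I},
\]
so that by construction $|S_p| = g(p,P)$. Since $c_p \in C(p)$ forces $\term{\mod{c_p p}} \in [N/8, 3N/8] \subseteq I$, the point $p$ itself belongs to $S_p$. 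I would then build a directed graph $G$ on vertex set $P$ with an arc from $p$ to each $q \in S_p \setminus \br{p}$; the out-degree of $p$ is then $d_p = g(p,P) - 1$.

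With the graph in place, I would invoke Claim~\ref{three}, which produces an independent set $Q \subseteq P$ with
\[
|Q| \in \frac{\Theta(1)}{\log N}\sum_{p \in P}\frac{1}{d_p + 1} = \frac{\Theta(1)}{\log N}\sum_{p \in P}\frac{1}{g(p,P)}.
\]
Rearranging gives the size bound $\sum_{p \in P} 1/g(p,P) \in |Q|\cdot\Theta(\log N)$ that the claim demands.

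The final step is to certify $g(q,Q) = 1$ for every $q \in Q$. Because $c_q \in C(q)$, the point $q$ itself always lies in $\br{q' \in Q : \term{\mod{c_q q'}} \in I}$, so $g(q,Q) \geq 1$ trivially. Conversely, the out-neighbors of $q$ in $G$ are exactly $S_q \setminus \br{q}$, and independence of $Q$ means none of them lies in $Q$; hence $S_q \cap Q = \br{q}$, which witnesses $g(q,Q) \leq 1$ via the query $c = c_q$.

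I expect the main subtlety to be the calibration $d_p + 1 = g(p,P)$, which hinges on $p \in S_p$ and thus on the inclusion $[N/8, 3N/8] \subseteq I$ for the relevant range of $N$ (requiring essentially $\log N \geq 2$). Once this is in place, everything else is a direct translation between the modular structure underpinning $g(\cdot,\cdot)$ and the independent-set machinery of Claim~\ref{three}.
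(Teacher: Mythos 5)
Your proposal is correct and mirrors the paper's own argument: pick for each $p$ a minimizing query $c_p \in C(p)$, build a directed graph whose out-neighborhood of $p$ is $S_p \setminus \br{p}$ so that the out-degree equals $g(p,P)-1$, apply Claim~\ref{three} to extract the independent set $Q$, and verify $g(q,Q)=1$ via independence. If anything, your write-up is cleaner than the paper's (which encodes the edge set via a threshold on $\sin^2$ with an undefined constant $b_3$), and you explicitly justify the two facts the paper leaves implicit, namely $p \in S_p$ (hence out-degree $= g(p,P)-1$) and that independence forces $S_q \cap Q = \br{q}$.
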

\begin{proof}
Let $b_2$ be defined as in the proof of Lemma~\ref{lem:sensitivity_query_bound}. For every $p\in P$, let $g^{-1}(p,P)\in C(p)$ such that
\begin{align*}
&g^{-1}(p,P)\in \\
&\quad \arg\min_{c\in C(p)} \left| \left\lbrace q\in P\setminus{\br{p}}: \Sin{qc\cdot \frac{2\pi}{N}}\geq \frac{b2}{N}\right\rbrace\right|.
\end{align*}
Let $G=(P,E)$ denote the directed graph whose vertices are the integers in $P$, and whose edges are
\begin{equation}\label{Edef}
\begin{split}
E &:=\bigg\{ (p,q)\in P\times P : p\neq q \text{ and }\\
&\Sin{\frac{2\pi}{N} qg^{-1}(p,P)}\geq \frac{1}{b_3} \bigg\}.
\end{split}
\end{equation}
The out degree of a vertex $p$ of $G$ is $g(p,P)-1$. 
By Claim~\ref{three}, there is independent set $Q$ of $G$ such that
$
 \sum_{i=1}^n \frac{1}{g(p,P)} \in |Q|\cdot \Theta(\log N). 
$
Since $Q$ is independent set, for every $q\in Q$ we have $g(q,Q)=1$.
\end{proof}

The following claim serves to bound the size of the independent set, i.e., for every point $p \in P$ in the set, $g(p,P) = 1$.
\begin{claim} \label{clm:bound_Q}
Let $Q\subseteq[N]$ such that $g(q,Q)=1$ for every $q\in Q$. Then
$ |Q|\leq \log N. $
\end{claim}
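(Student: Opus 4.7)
The plan is to argue by contradiction via a pigeonhole step, exploiting the witnesses guaranteed by $g(q,Q)=1$. First I would observe that the hypothesis is hereditary: for any $Q_0\subseteq Q$, one still has $g(q,Q_0)=1$ for every $q\in Q_0$, because the set whose size is minimized in the definition of $g$ shrinks when $P$ shrinks yet always contains $q$ itself (the requirement $c\in C(q)$ places $cq \bmod (N/2)$ into $[N/8,3N/8]$, which sits inside $[N/(16\log N),\,N/2-N/(16\log N)]$). Assuming for contradiction that $|Q|>\log N$, I may therefore shrink $Q$ to a subset of size $k:=\lfloor\log_2 N\rfloor+1$, which still satisfies $2^{k}>N$.

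The next step is a pigeonhole on the map $\phi\colon\{0,1\}^Q\to\mathbb{Z}/N\mathbb{Z}$ defined by $\phi(\epsilon)=\sum_{q\in Q}\epsilon_q q$. Since $2^{k}>N$, two distinct $\epsilon,\epsilon'\in\{0,1\}^Q$ collide under $\phi$, producing a nonzero $\delta=\epsilon-\epsilon'\in\{-1,0,1\}^Q$ with $\sum_{q\in Q}\delta_q q\equiv 0\pmod N$ and support $S$ of size $|S|\le k\le \log_2 N+1$.

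I would then pick any $q^*\in S$ and take the witness $c^*\in C(q^*)$ achieving $g(q^*,Q)=1$. By definition, $c^*q^*\bmod(N/2)\in[N/8,3N/8]$, while $c^*q\bmod(N/2)\in[0,N/(16\log N))\cup(N/2-N/(16\log N),N/2)$ for every $q\in Q\setminus\{q^*\}$. Writing $c^*q=m_q(N/2)+d(q)$ with signed remainder $d(q)\in(-N/4,N/4]$, this gives $|d(q^*)|\ge N/8$ and $|d(q)|<N/(16\log N)$ for $q\ne q^*$. Multiplying the relation $\sum\delta_q q\equiv 0\pmod N$ by $c^*$ and reducing modulo $N/2$ yields $\sum_{q\in S}\delta_q d(q)\equiv 0 \pmod{N/2}$. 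Isolating the $q^*$ contribution and applying the triangle inequality with $|S|-1\le\log_2 N$ sandwiches the sum:
\[
\frac{N}{16}\ \le\ |d(q^*)|-(|S|-1)\cdot\frac{N}{16\log N}\ \le\ \Bigl|\sum_{q\in S}\delta_q d(q)\Bigr|\ \le\ \frac{N}{4}+\frac{N}{16}\ <\ \frac{N}{2}.
\]
A nonzero integer of absolute value strictly less than $N/2$ cannot be $\equiv 0\pmod{N/2}$, a contradiction, so $|Q|\le \log N$.

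The main obstacle is balancing the two opposing demands of the pigeonhole step: $|Q|$ must be large enough that $2^{|Q|}>N$, to force a nontrivial $\{-1,0,1\}$-relation among $Q$, but the support of that relation must be small enough that the aggregate error $\sum_{q\ne q^*}|\delta_q d(q)|\le(|S|-1)N/(16\log N)$ does not swamp the signal $|d(q^*)|\ge N/8$. Shrinking $Q$ to size $\lfloor\log_2 N\rfloor+1$ threads this needle, and the constant $16$ inside the definition of $g$ is exactly what makes the error budget $\le N/16$ strictly smaller than the signal lower bound $N/8$.
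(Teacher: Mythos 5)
Your proof is correct, and it follows the same broad pigeonhole-plus-witness strategy as the paper's argument, but it is a genuinely sharper execution that fixes two soft spots. The paper also takes a subset $T\subseteq Q$ of size $1+\log N$, applies pigeonhole to subset sums of the residues $(\mod{q})$, obtains two subsets $T_1,T_2$ with equal sums, writes $(\mod{p})$ as a signed combination of the other residues, and then derives a contradiction with $c\in C(p)$. Your proof differs in two places that matter. First, where to apply pigeonhole: you work in $\mathbb{Z}/N\mathbb{Z}$, where the codomain has exactly $N$ residues and $2^k>N$ forces a collision; the paper counts integer-valued subset sums, whose range has roughly $(1+\log N)\frac{N}{2}$ values, which exceeds the $2N\approx 2^{|T|}$ subsets once $N\gtrsim 8$, so the paper's pigeonhole as written does not close without passing to a residue count. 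Second, how to handle the non-witness points: the paper's case analysis treats every $q\in T\setminus\{p\}$ as uniformly in the "small" or uniformly in the "large" half of the complement of the window, whereas in general the signs can be mixed; your use of signed remainders $d(q)\in(-N/4,N/4]$ together with the triangle inequality handles every sign pattern at once, giving $\frac{N}{16}\le\bigl|\sum_{q\in S}\delta_q d(q)\bigr|<\frac{N}{2}$ and hence the contradiction $\not\equiv 0\pmod{N/2}$. You also correctly justify heredity of the hypothesis ($g(q,Q_0)=1$ for $Q_0\subseteq Q$), which the paper uses implicitly. In short: same blueprint, but your version repairs the pigeonhole count and the case analysis, and is the one I would keep.
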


Finally, combining Lemma~\ref{lem:sensitivity_query_bound}, Lemma~\ref{lem:bounding_c(p)}, Claim~\ref{clm:bound_gPp} and Claim~\ref{clm:bound_Q}, satisfies Theorem~\ref{mainthm} which presents a bound on the total sensitivity with respect to the $\Sin{\cdot}$ cost function.

\subsection{Bound on The VC Dimension}\label{sec:vcbound}
First, we define VC dimension with respect to the \fft problem.   
\begin{definition}[VC-dimension~\citep{braverman2016new}]
\label{def:dimension}
Let $N>1$ be a positive integer, $P\subset [N]$ be a set of $n$ integer, and let $r \in [0,\infty)$, we define 
\[
\RANGES(x,r) = \br{p \in P \mid f(p,x) \leq r},
\]
for every $x \in [N]$ and $r \geq 0$. The dimension of the \fft problem is the size $\abs{S}$ of the largest subset $S \subset P$ such that
\[
\abs{\br{S \cap \RANGES(x,r) \mid x \in [N], r \geq 0 }} = 2^{\abs{S}}.
\]
\end{definition}

\begin{lemma}[Bound on the VC dimension of the \fft problem]\label{VcbOUND}
Let $n,N \geq 1$ be a pair of positive integers such that $n \leq N$, and let $P \subseteq [N]$ be a set of $n$ points. 
Then the VC dimension of the \fft problem with respect to $P$ and $N$ is $O\term{\log\term{nN}}$. 
\end{lemma}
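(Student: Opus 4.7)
The plan is to bound the VC dimension by a direct counting argument: I will upper bound the total number of distinct ranges $\RANGES(x,r)$ that the family can produce, and then use the fact that a shattered set of size $d$ must give rise to $2^d$ distinct traces.

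First I would fix $x \in [N]$ and observe that $r \mapsto \RANGES(x,r)$ is nondecreasing in $r$ (with respect to set inclusion): as $r$ grows, points only enter, never leave. The chain of sets $\br{\RANGES(x,r) \mid r \geq 0}$ therefore changes only when $r$ crosses one of the (at most) $n$ thresholds $\br{f(p,x) \mid p \in P}$, so this chain contains at most $n+1$ distinct subsets of $P$.

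Next, I would sum this over all $x \in [N]$, which gives at most $N(n+1)$ distinct members of the whole family $\br{\RANGES(x,r) \mid x \in [N], r \geq 0}$. Consequently, for every $S \subseteq P$, the set of traces $\br{S \cap \RANGES(x,r) \mid x \in [N], r \geq 0}$ has cardinality at most $N(n+1)$. If $S$ is shattered, Definition~\ref{def:dimension} requires this cardinality to equal $2^{|S|}$, so $2^{|S|} \leq N(n+1)$, whence
\[
|S| \leq \log_2\!\term{N(n+1)} \in O\!\term{\log(nN)},
\]
which is the claimed bound.

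I do not expect any real obstacle here: the argument does not use the specific shape of $f(p,x) = \Sin{\tfrac{2\pi}{N}px}$ beyond the fact that the query domain $[N]$ is discrete of size $N$, and beyond the monotonicity in $r$ which gives the $n+1$ factor per query. The only mildly delicate bookkeeping is making sure the thresholds in $r$ are counted correctly, which is immediate since there are at most $n$ values $f(p,x)$ for each fixed $x$.
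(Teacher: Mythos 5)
Your proof is correct and follows essentially the same route as the paper's: fix $x$, use monotonicity of $r \mapsto \RANGES(x,r)$ to bound the number of distinct ranges per query by roughly $n$, multiply by the $N$ possible queries, and conclude that a shattered set has size at most $\log(nN)$. The only (cosmetic) difference is that you carefully track the $n+1$ factor and write the inequality $\leq N(n+1)$, whereas the paper writes $\leq n$ per query and $= nN$ overall; this does not affect the asymptotic bound.
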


\begin{proof}
We note that the VC-dimension of the set of classifiers that output the sign of a sine wave parametrized by a single parameter (the angular frequency of the sine wave) is infinite. 
However since our query space is bounded, i.e., every query is an integer in the range $[1,N]$, then the VC dimension is bounded as follows. First, let $D(p,x)= \sin^2( px \cdot \frac{2\pi}{N})$ for every $p\in P$, and $x\in [n]$. We observe that for every $p \in P$ and $x \in [N]$, $D(p,x) \leq 1$. Hence, for every $x \in [N]$ and $r \in [0,\infty)$ it holds that 
$\br{\RANGES(x,r) \middle| r \geq 0} =  \br{\RANGES(x,r) \middle| r \in [0,1]},$
where $\RANGES(x,r) = \br{p \in P \mid d(p,x) \leq r}$ is defined as in Definition~\ref{def:dimension}. Secondly, by the definition of $\RANGES$, we have that for every pair of $r_1,r_2 \in [0,1]$ and $x \in [N]$ where $r_2 \geq r_1$, $\RANGES\term{x,r_2} = \bigcup\limits_{r \in \left[r_1, r_2\right]} \RANGES\term{x,r}.$

This yields that $\abs{\br{\RANGES\term{x,r} \middle| r \in [0,1]}} \leq n$ for any $x \in [N]$, which consequently means that
$
\abs{\br{\RANGES\term{x,r} \middle| x \in [N], r \geq 0}} = nN,
$
since $x \in [N]$ is an integer, and each such $x$ would create a different set of $n$ subsets of $P$. Thus we get that $\forall \, S \subseteq P$:
$
\abs{\br{S \cap \RANGES\term{x,r} \middle| x \in [N], r \geq 0}} \leq nN = 2^{\log\term{nN}}.
$

The claim then follows since the above inequality states that the VC dimension is bounded from above by $\log\term{nN}$.
\end{proof}

\section{REMARKS AND EXTENSIONS}
In this section briefly discuss several remarks and extensions of our work. 

\textbf{Parallel implementation. } Computing the sensitivities for $n$ input points requires $O(Nn)$ time, this is by computing $cost(c):= {\sum_{q\in P}\sin^2( qc \cdot \frac{2\pi}{N})}$ for every $c\in [n]$, and then bounding the sensitivity for very $p\in P$ by iterating over all queries $c\in [N]$, and taking the one which maximizes its term. 
However, this can be practically improved by applying a distributed fashion algorithm. 
Notably, one can compute the cost ${\sum_{q\in P}\Sin{qc\cdot \frac{2\pi}{N}}}$ of every query $c\in [N]$ independently from all other queries in $[N]$, similarly, once we computed the cost of every query $c\in[N]$, the sensitivity of each point $p\in P$ can be computed independently from all of the other points. Algorithm~\ref{alg:sens} utilises these observations: It receives as input an integer $N$ which indicates the query set range, a set $P\subset [N]$, and an integer $M$ indicating the number of machines given to apply the computations on. Algorithm~\ref{alg:sens} outputs a function $s:P\to (0,\infty)$, where $s(p)$ is the sensitivity of $p$ for every $p\in P$. 

\setcounter{AlgoLine}{0}
\begin{algorithm}
\caption{$\textsc{Calculate-Sensitivities}(P,N,M)$\label{alg:sens}}
\SetKwInOut{Input}{Input}
\SetKwInOut{Output}{Output}
\Input{An integer $N>1$, a set $P\subset [N]$ of $n>1$ integers, and an integer $M\geq 1$.}
\Output{A function $s:P\to (0,\infty)$, where for every $p\in P:$ $s(p)$ is the sensitivity of $p$.}

$C_1,\cdots,C_M:=$ a partition of $[N]$ into $M$ disjoint subsets, each contains at most $\ceil{N/M}$ integers from $[N]$. \Comment{In some cases, the last set $C_M$ might be empty.}

$P_1,\cdots,P_M:=$ a partition of $P$ into $M$ disjoint subsets, each contains at most $\ceil{n/M}$ integers from $P$. \Comment{In some cases, the last set $P_M$ might be empty.}

\For{every $i\in [M]$, in distributed manner}
{

\For{every $c\in C_i$}{

Set $cost(c):= {\sum_{q\in P}\Sin{qc\cdot \frac{2\pi}{N}}}$
}
}

\For{every $i\in [M]$, in distributed manner}
{

\For{every $p\in P_i$}{

Set $s(p):= \max_{c\in [N]} \frac{\Sin{pc\cdot \frac{2\pi}{N}}}{cost(c)}$
}
}

\Return $s$

\end{algorithm}

\textbf{Extension to high dimensional data.} Our results can be easily extended to the case where (i) the points (of $P$) lie on a polynomial grid of resolution $\Delta>0$ of any dimension $d\geq 1$, and (ii) they are further assumed to be contained inside a ball of radius $N>0$. Note that, such assumptions are common in the coreset literature, e.g., coresets for protective clustering~\cite{edwards2005no}, relu function~\cite{mussay2021data}, and logistic regression~\cite{tolochinsky2018generic}. 
The analysis with respect to the sensitivity can be directly extended, and the VC dimension is now bounded by $O\left(d \log\left(\frac{N}{d} \Delta n\right)\right)$. Both claims are detailed at Section~\ref{sec:highd} of the appendix.

\textbf{Approximating the optimal solution via coresets. } Let $N>1$ be an integer, $P\subset [N]$, and let $(S,v)$ be a coreset for $P$ as in Theorem~\ref{thm:coreset}. Let $p^* \in \arg\min_{c\in [N] }\sum_{p \in P}\sin^2( pc\cdot \frac{2\pi} { N})$ and $c^* \in \arg\min_{c\in [N]}\sum_{p\in S}v(p)\sin^2(pc\cdot \frac{2\pi} {N})$ be the optimal solutions on the input and its coreset, respectively, then 
$ \sum_{p \in P}\sin^2(  pc^* \cdot {2\pi}{N})\leq (1+\varepsilon) \sum_{p \in P}\sin^2( pp^* \cdot {2\pi}{ N})$.


\section{EXPERIMENTAL RESULTS}
\begin{figure*}[htb!]
\centering
\includegraphics[width=.32\textwidth]{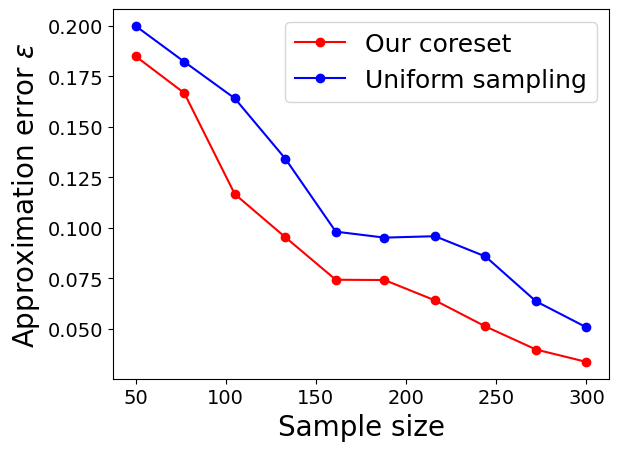}
\includegraphics[width=.32\textwidth]{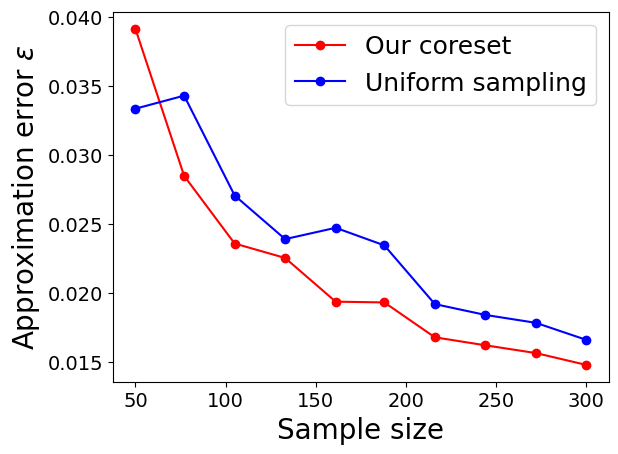}
\includegraphics[width=.32\textwidth]{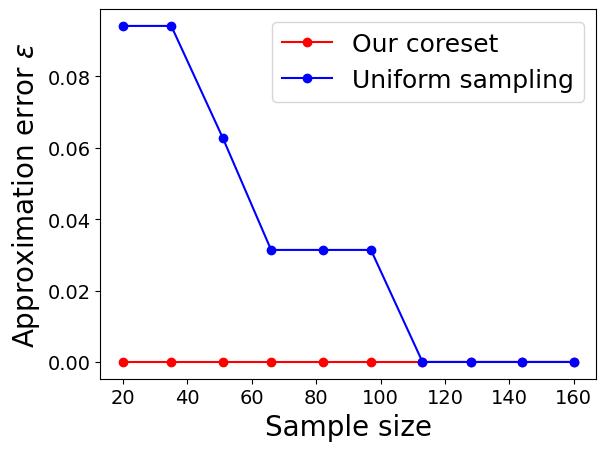}
\caption{Optimal solution approximation error: The x axis is the size of the chosen subset, the y axis is the optimal solution approximation error. Datasets, from left to right, (i)-(1),  (i)-(2), and (iii).}
\label{fig:optmal soltuion}
\end{figure*}
\begin{figure*}[htb!]
\includegraphics[width=.32\textwidth]{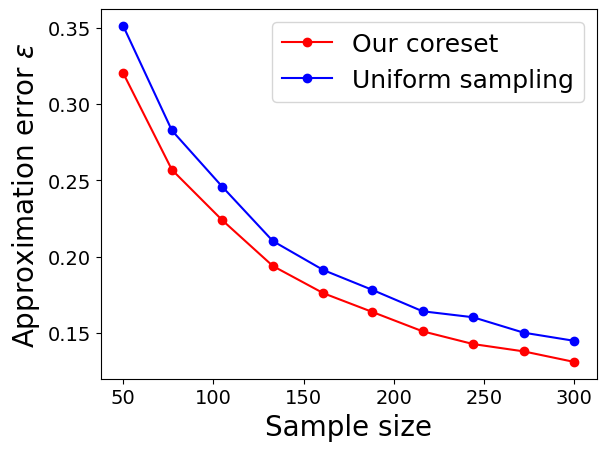}
\includegraphics[width=.32\textwidth]{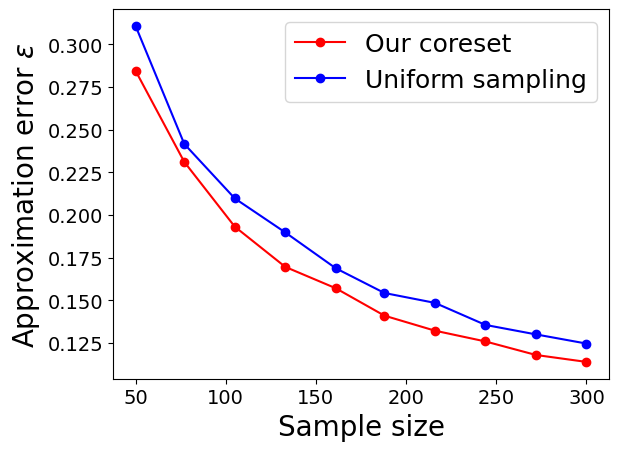}
\includegraphics[width=.32\textwidth]{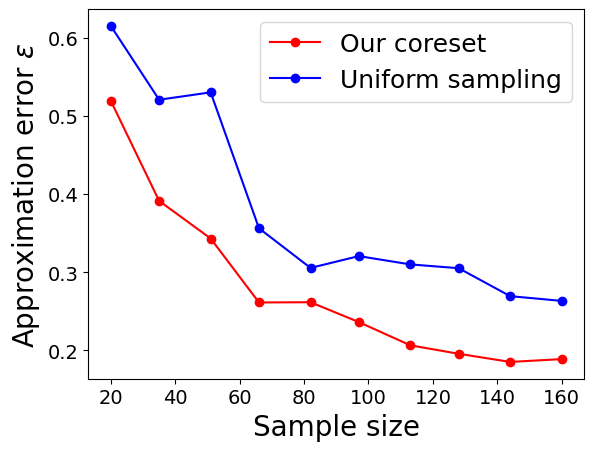}
\caption{Maximum approximation error: The x axis is the size of the chosen subset, the y axis is the maximum approximation error across the whole set of queries.  Datasets, from left to right, (i)-(1),  (i)-(2), and (iii). }
\label{fig:max error}
\end{figure*}

In what follows we evaluate our coreset against uniform sampling on real-world datasets. 

\textbf{Software/Hardware.} Our algorithms were implemented in Python 3.6~\citep{10.5555/1593511} using \say{Numpy}~\citep{oliphant2006guide}. Tests were performed on $2.59$GHz i$7$-$6500$U ($2$ cores total) machine with $16$GB RAM.

\subsection{Datasets And Applications}\label{sed:datasets}
\begin{enumerate}[label=(\roman*)]
    \item Air Quality Data Set~\citep{de2008field}, which contains $9,358$ instances of hourly averaged responses from an array of $5$ metal oxide chemical sensors embedded in an Air Quality Chemical Multisensor Device. We used two attributes (each as a separate dataset) of hourly averaged measurements of (1) tungsten oxide - labeled by (i)-(1) in the figures, and (2) NO2 concentration - labeled by (i)-(2). 
    Fitting the sine function on each of these attributes aids in understanding their underlying structure over time. This helps us in finding anomalies that are far enough from the fitted sine function. Finding anomalies in this context could indicate a leakage of toxic gases. Hence, our aim is to monitor their behavior over time, while using low memory to store the data. 
    \item Single Neuron Recordings~\citep{singlenueron} acquired from a cat's auditory-nerve fiber. The dataset has $127,505$ samples and the goal of \fft with respect to such data is to infer cyclic properties from neuron signals which will aim in further understanding of the wave of a single neuron and it's structure.
    
    \item Dog Heart recordings of heart ECG~\citep{dogHeart}. The dataset has $360,448$ samples. We have used the absolute values of each of the points corresponding to the ``electrocardiogram'' feature which refers to the ECG wave of the dog's heart. The goal of \fft on such data is to obtain the distribution of the heart beat rates. This aids to detects spikes, which could indicate health problems relating to the dog's heart.
\end{enumerate}

\subsection{Reported Results} 

\textbf{Approximation error.} We iterate over different sample sizes, where at each sample size, we generate two coresets, the first is using uniform sampling and the latter is using sensitivity sampling. For every such coreset $(S,v)$, we compute and report the following.
\begin{enumerate}[label=(\roman*)]
   \item The optimal solution approximation error, i.e., we find $c^* \in \arg\min_{c \in C} \sum_{p \in S} v(p)\sin^2(\frac{2\pi}{N} \cdot pc)$. Then the approximation error $\varepsilon$ is set to be $\frac{\sum_{p \in P} \sin^2(\frac{2\pi}{N}  pc^*)}{\min_{c \in C}  \sum_{p \in P} \sin^2(\frac{2\pi}{N}  pc)}-1$; see Figure~\ref{fig:optmal soltuion}.
   \item The maximum approximation error of the coreset over all query in the query set, i.e., $\max_{c\in C} \abs{1 - \frac{\sum_{p \in S} v(p)\sin^2(\frac{2\pi}{N}  pc)}{\sum_{p \in P} \sin^2(\frac{2\pi}{N}  pc)}}$; see Figure~\ref{fig:max error}.
\end{enumerate}
The results were averaged across $32$ trials.  As can be seen in Figures~\ref{fig:optmal soltuion} and~\ref{fig:max error}, the coreset in such context (for the described applications in Section~\ref{sed:datasets}) encapsulates the structure of the dataset and approximate the datasets behavior. Our coreset obtained consistent smaller approximation errors in almost all the experiments in both experiments than those obtained by uniform sampling. Observe that our advantage on Dataset~(iii) is much more significant than the others as this dataset admits a clear periodic underlying structure. 
Note that, in some cases the coreset is able to encapsulate the entirety of the underlying structure at small sample sizes much better than uniform sampling due to its sensitivity sampling. This means that the optimal solution approximation error in practice can be zero; see the rightmost plot in Figure~\ref{fig:optmal soltuion}.

\begin{figure*}[htb!]
\includegraphics[width=\textwidth]{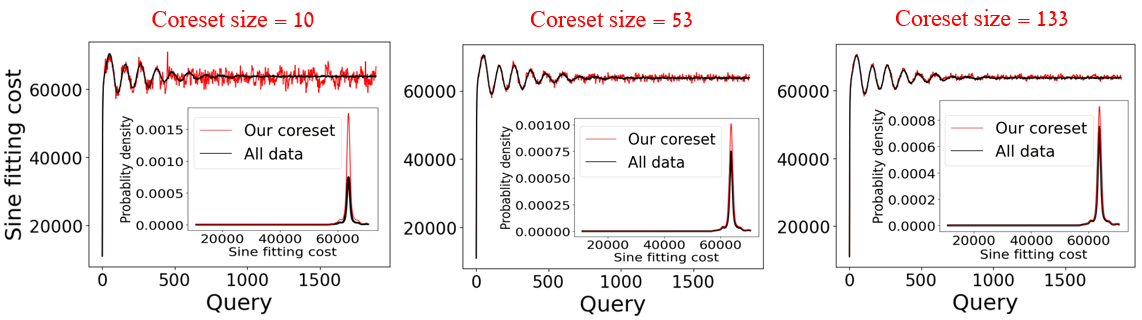}
\caption{\fft cost as a function of the given query. Dataset (ii) was used.}
\label{fig:sine-fit-spike}
\end{figure*}

\begin{figure*}[htb!]
\includegraphics[width=\textwidth]{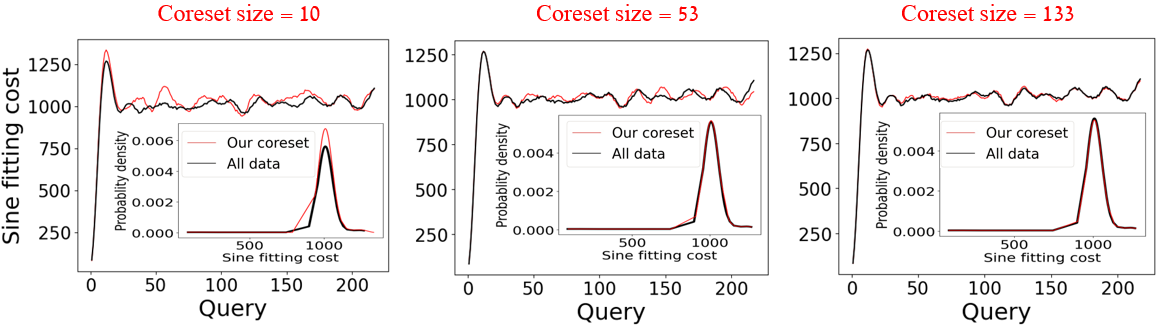}
\caption{\fft cost as a function of the given query. Dataset (iii) was used.}
\label{fig:sine-fit-dog}
\end{figure*}

\textbf{Approximating the Sine function's shape and the probability density function of the costs.} In this experiment, we visualize the \fft cost as in~\eqref{eq:our_cost} on the entire dataset over every query in $[N]$ as well as visualizing it on our coreset. As depicted in Figures~\ref{fig:sine-fit-spike} and~\ref{fig:sine-fit-dog}, the large the coreset size, the smaller the deviation of both functions. This proves that in the context of \fft, the coreset succeeds in retaining the structure of the data up to a provable approximation. In addition, due to the nature of our coreset construction scheme, we expect that the distribution will be approximated as well. This also can be seen in Figure~\ref{fig:sine-fit-spike} and~\ref{fig:sine-fit-dog}. 
Specifically speaking, when the coreset size is small, then the deviation (i.e., approximation error) between the cost of~\eqref{eq:our_cost} on the coreset from the cost of~\eqref{eq:our_cost} on the whole data, will be large (theoretically and practically), with respect to any query in $[N]$. As the coreset size increases, the approximation error decreases as expected also in theory. This phenomenon is observed throughout our experiments, and specifically visualized at Figures~\ref{fig:sine-fit-spike} and~\ref{fig:sine-fit-dog} where one can see that the alignment between probability density functions with respect to the coreset and the whole data increases with the coreset size. Note that, we used only $2000$ points from Dataset (iii) to generate the results presented at Figure~\ref{fig:sine-fit-dog}.


\section{CONCLUSION, NOVELTY, AND FUTURE WORK}

\textbf{Conclusion.} In this paper, we proved that for every integer $N>1$, and a set $P\subset [N]$ of $n>1$ integers, we can compute a coreset of size $O(\log(N)^{O(1)})$ for the \fft problem as in~\eqref{eq:our_cost}. 
Such a coreset approximates the \fft cost for every query $c$ up to a $1\pm\eps$ multiplicative factor, allowing us to support streaming and distributed models.
Furthermore, this result allows us to gain all the benefits of coresets (as explained in Section~\ref{sec:coresets}) while simultaneously maintaining the underlying structure that these input points form as we showed in our experimental results. 



\textbf{Novelty.} The proofs are novel in the sense that the used techniques vary from different fields that where not previously leveraged in the context of coresets, e.g., graph theory, and trigonometry. 
Furthermore to our knowledge, our paper is the first to use sensitivity to obtain a coreset for problems where the involved cost function is trigonometric, and generally functions with cyclic properties. We hope that it will help open the door for more coresets in this field.

\textbf{Future work} includes (i) suggesting a coreset for a high dimensional input, (ii) computing and proving a lower bound on the time it takes to compute the coreset, (iii) extending our coreset construction to a generalized form of cost function as in~\citep{souders1994ieee,ramos2008new}, and (iv) discussing the applicability of such coresets in a larger context such as quantization~\citep{hong2022daq,zhou2018adaptive,park2017weighted} of deep neural networks while merging it with other compressing techniques such as pruning~\citep{liebenwein2019provable,baykal2018data} and low-rank decomposition~\citep{tukan2021no,maalouf2020deep,liebenwein2021compressing}, and/or using it as a prepossessing step for other coreset construction algorithms that requires discretization constraints on the input, e.g.,~\citep{varadarajan2012near}. 
\label{sec:conclutions}

\clearpage

\section{ACKNOWLEDGEMENTS}
This work was partially supported by the Israel National Cyber Directorate via the BIU Center for Applied Research in Cyber Security.

\bibliographystyle{apalike}
\bibliography{main.bib}


\clearpage
\appendix

\thispagestyle{empty}

\onecolumn \makesupplementtitle
\section{PROOF OF TECHNICAL RESULTS}
\subsection{Proof of Claim~\ref{clm:ax}}
\begin{proof}
Put $x \in \INT$ and observe that
\begin{equation}
\label{eq:x_a}
x = \floor{\frac{x}{a}} a + \mod{x}[a].
\end{equation}

Thus, 
\begin{equation}
\label{eq:sin_prop_1}
\begin{split}
\abs{\sin{\term{\frac{b\pi}{a} x}}} &= \abs{\sin{\term{\frac{b\pi}{a} \floor{\frac{x}{a}}a + \frac{b\pi}{a}\mod{x}[a]}}}\\ 
&= \abs{\sin{\term{\floor{\frac{x}{a}} b\pi + \frac{b\pi}{a} \mod{x}[a]}}},
\end{split}
\end{equation}
where the first equality holds by~\eqref{eq:x_a}.

Using trigonometric identities, we obtain that
\begin{equation}
\label{eq:sin_prop_2}
\begin{split}
&\left|\sin{\term{\floor{\frac{x}{a}} b\pi + \frac{b\pi}{a} \mod{x}[a]}}\right| =\\ &\quad \left|\sin{\term{\floor{\frac{x}{a}} b\pi}} \cdot \cos{\term{\frac{b\pi}{a}\term{\mod{x}[a]}}} \right. \\
&\quad\left.+ \sin{\term{\frac{b\pi}{a}\term{\mod{x}[a]}}} \cdot \cos{\term{\floor{\frac{x}{a}} b\pi}}\right|.
\end{split}
\end{equation}

Since $\term{\floor{\frac{x}{a}} b\pi} \in \br{0, \pi, 2\pi, 3\pi, \cdots}$, we have that
\[
\sin{\term{\floor{\frac{x}{a}} b\pi}} = 0,
\]
and 
\[
\abs{\cos{\term{\floor{\frac{x}{a}} b\pi}}} = 1.
\]

By combining the previous equalities with~\eqref{eq:sin_prop_1} and~\eqref{eq:sin_prop_2}, Claim~\ref{clm:ax} follows.
\end{proof}

\subsection{Proof of Claim~\ref{clm:bound_Q}}
\begin{proof}
Contradictively assume that $|Q|\geq 1+\log N$, and let $T$ be a subset of $1+\log N$ integers from $Q$. Since $g(q,Q)=1$ for every $q\in Q$, we have
\[
\abs{\br{\mod{q}}}= |Q|\geq 1+\log N.
\]
Observer that (i) the set $T$ has $2^{|T|}> N$ different subsets. Hence it has $O(N^2)$ distinct pair of subsets, and (ii) for any $T'\subset T$ we have that $\sum_{q\in T'}(\mod{q})\in \left[\round{(1+\log N)\frac{N}{2}}\right]$. 
By (i), (ii) and the pigeonhole principle there are two distinct sets $T_1,T_2\subset T$ such that
\[                             
\sum_{q\in T_1}(\mod{q})=\sum_{q\in T_2}(\mod{q}).
\]


Put $p\in T_2$, and observe that
\[
\mod{p}=\sum_{q\in T_1}(\mod{q})-\sum_{q\in T_2\setminus\br{p}}(\mod{q}).
\]
Therefore for every $c\in[N]$,
\begin{equation}\label{cpm3}
\begin{split}
&\mod{cp} =\quad\mod{c(\mod{p}))}\\
&=\mod{c\term{\sum_{q\in T_1}(\mod{q})-\sum_{q\in T_2\setminus\br{p}}(\mod{q})}}\\
&\leq \mod{\term{\sum_{q\in T\setminus\br{p}} (\mod{cq})}}.
\end{split}
\end{equation}

Since $g(p,Q)=1$ by the assumption of the claim, there is $c\in C(p)$ such that for every $q \in T$ (where $q \neq p$), either \begin{enumerate*}[label=(\roman*)]
    \item $\mod{cq} \leq \frac{N}{16\log{N}}$,\label{case:1_last} or,
    \item $\mod{cq} \geq \frac{15N}{16\log{N}}$ \label{case:2_last}.
\end{enumerate*}

\noindent\textbf{Handling Case~\ref{case:1_last}.} Assuming that this case holds, then by~\eqref{cpm3} we obtain that
\begin{equation*}
\begin{split}
\mod{cp} &\leq \mod{\term{\sum\limits_{q \in T \setminus \br{p}} \frac{N}{16\log{N}}}} \\
&= \mod{\frac{N}{16}} = \frac{N}{16}.
\end{split}
\end{equation*}
This contradicts the assumption that $c \in C(p)$.

\noindent\textbf{Handling Case~\ref{case:2_last}.} Combining the assumption of this case with~\eqref{cpm3}, yields that
\begin{equation*}
\begin{split}
\mod{cp} &\geq \mod{\term{\sum\limits_{q \in T \setminus \br{p}} \frac{15N}{16\log{N}}}} \\
&= \mod{\frac{15N}{16}} = \frac{15N}{16}.
\end{split}
\end{equation*}
This is a contradiction to the assumption that $c \in C(p)$.
\end{proof}

\section{EXTENSION TO HIGH DIMENSIONAL DATA}
\label{sec:highd}
In this section we formally discuss the generalization of our results to constructing coresets for sine fitting of rational high dimensional data. First note that in such (high dimensional) settings, the objective of the \fft problem becomes 
\[
\min_{c \in C} \sum\limits_{p \in P} \sin^2\term{\frac{2\pi}{N} \cdot p^T c},
\]
where $P$ is the set of high dimensional input points and $C$ is the set of queries. Note that, we still assume that both sets are finite and lie on a grid of resolution $\Delta$; see next paragraph for more details. 


\paragraph{Assumptions. }To ensure the existence of coresets for the generalized form, we first generalize the assumptions of our results as follows: (i) the original set of queries $[N]$ is now generalized to be the set $B$ of all points with non-negative coordinates and of resolution $\Delta$. Formally speaking, 
let $\Delta >0$ be a rational number that denotes the resolution, and let $X := \br{i\Delta }_{i=0}^{\floor{\frac{N}{\sqrt{d}\Delta}}} \cup \br{N/\sqrt{d}}$ denote the set $\br{0,\Delta ,2\Delta,3\delta,\cdots,N/\sqrt{d}}$, now, our set of queries is defined to be 
\[
B:= X^d =  \underbrace{X \times X \times \cdots \times X}_{d \text{ times}},
\]
i.e., $B\subset \REAL^d$, and for every $i\in [d]$ and $x\in B$, the $i$th coordinate $x_i$ of $x$ is from $X$ (if $x\in B$, then $\forall_{i\in [d]} x_i\in X$). (ii) The input $P$ is contained in the set of queries, thus, our generalization assumes that $P \subseteq B$. 

\paragraph{Sensitivity bound and total sensitivity bound.} First, for every $x\in B$ and $p\in P$ let $D(p,x)= \sin^2\term{\frac{2\pi}{N}p^Tx }$.
The main reason that the our main result which suits the one dimensional setting (where points and queries are integers) is interesting, relies on the fact that each point $p$ in the input set $P$ results a sine wave $Sine(p,\cdot):[N] \to [0,1]$ of a different wavelengths, where specifically for a point $p \in P \subseteq [N]$, the wavelength of the corresponding sine wave is $\frac{2\pi p}{N}$. This ensures that most point don't admit the same squared sine waves which in turn fuels the need to find a small set of points that the sum of their squared sine waves approximate the total sum of the squared sine waves of the input set of integral points in one dimensional space.

Following the same observation, we simply generalize our cost function to account for such traits, where the wavelength of the obtained signals is shown along the direction of the points in $P \subseteq B$; the following figure serves as descriptive illustration.

\begin{figure}[htb!]
    \centering
    \includegraphics{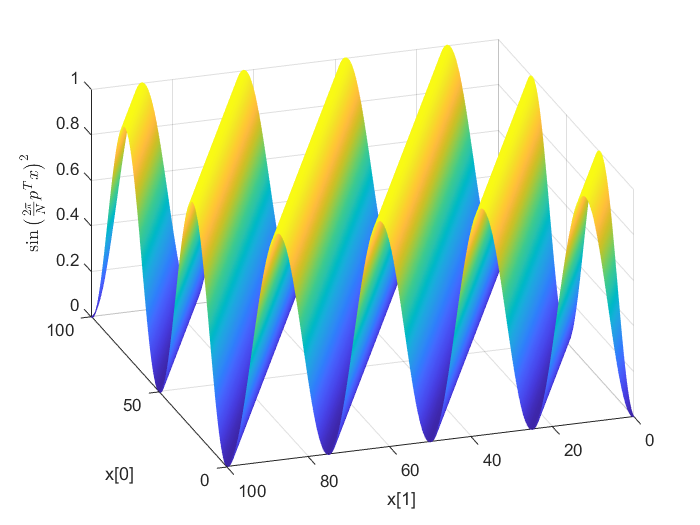}
    \caption{Given a point $p := \begin{pmatrix} 1 \\ 2 \end{pmatrix}$ and a set of queries $B$ where $\Delta := 0.1$ and $N = 100$, the above is a plot of $\sin^2{\term{\frac{2\pi}{N} p^Tx}}$ over every query $x \in B$. Here the $x$-axis denotes the first entry of a query $x \in B$, the $y$-axis denotes the second entry of a query $x$.}
    \label{fig:my_label}
\end{figure}

Following along the ideas above from the one dimensional case, choosing the set of queries to be $B$, and $P$ to be any set of $n$ points contained in $B$, fulfills the same ideas. Thus, we can use $D(p,x) := \sin^2{\term{\frac{2\pi}{N} p^Tx}}$ as our generalized form of squared sine loss function.

Since the dot product is non-negative in our context, it behave as a generalization of the product between two non-negative scalars. Our previous results depends on the product of two scalars rather than the scalar themselves, and from such observation, it can be seen as a leverage point for this generalization to be equipped into our proofs.

\paragraph{Bounding the VC dimension.} It was stated previously that
the necessity of generalizing the assumptions of our results is crucial. Such necessity is needed to handle the case of restricting the VC dimension of the \fft problem in the $d$-dimensional Euclidean space to be finite. The following gives the formal ingredients for such purpose.

\begin{lemma}[Extension of Lemma~\ref{VcbOUND}]
\label{lem:VcbOUND_Ext}
Let $N,n,d$ be a triplet of positive integers where $N \geq n > d$. Let $\Delta>0$ be rational number such that, let $X := \br{i\Delta }_{i=0}^{\floor{\frac{N}{\sqrt{d}\Delta}}} \cup \br{\frac{N}{\sqrt{d}}}$, and let $B := X^d$ denote the set of all $d$-dimensional points such that each coordinate of any point $x \in B$ is from $X$, i.e., $B$ has a resolution of $\Delta$.
Let $P \subseteq B$ be a set of $n$ points. Then the VC dimension of the \fft problem with respect to $P$ and $B$ is $O\term{d \log\term{\frac{N}{\Delta}n}}$. 
\end{lemma}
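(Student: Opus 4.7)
The proof plan is to mimic the strategy used for the one-dimensional VC bound in Lemma~\ref{VcbOUND}, replacing the finite query set $[N]$ by the discretized grid $B=X^d$ and carefully counting its cardinality. Since the range family is finite, the VC dimension is trivially bounded by the logarithm of the number of distinct ranges, so the entire task reduces to (i) bounding, for each fixed query $x\in B$, the number of distinct sets $\mathrm{Ranges}(x,r)$ as $r$ varies, and (ii) bounding $|B|$.

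First I would observe that $D(p,x)=\sin^2\!\left(\tfrac{2\pi}{N}p^T x\right)\in[0,1]$ for every $p\in P$ and $x\in B$, so as in the one-dimensional proof the parameter $r\geq 0$ can be restricted to $r\in[0,1]$ without enlarging the range family. Then, for a fixed query $x$, the map $r\mapsto \mathrm{Ranges}(x,r)=\{p\in P\mid D(p,x)\leq r\}$ is monotone in $r$ and only changes at one of the $n$ values $\{D(p,x)\mid p\in P\}$, so it takes at most $n+1$ distinct values as $r$ varies. This reproduces, query by query, the key step of Lemma~\ref{VcbOUND}.

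Next, the plan is to count $|B|$. Since $X=\{i\Delta\}_{i=0}^{\lfloor N/(\sqrt{d}\Delta)\rfloor}\cup\{N/\sqrt{d}\}$, we have
\[
|X|\;\leq\;\left\lfloor\tfrac{N}{\sqrt{d}\Delta}\right\rfloor+2\;=\;O\!\left(\tfrac{N}{\sqrt{d}\Delta}\right),
\]
and hence $|B|=|X|^d=O\!\left(\bigl(\tfrac{N}{\sqrt{d}\Delta}\bigr)^{d}\right)$. Combining the two counts, the total number of distinct ranges is bounded by
\[
\bigl|\{\mathrm{Ranges}(x,r)\mid x\in B,\ r\geq 0\}\bigr|\;\leq\;(n+1)\cdot|B|\;=\;O\!\left(n\cdot\bigl(\tfrac{N}{\sqrt{d}\Delta}\bigr)^{d}\right).
\]

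Finally, for any shattered subset $S\subseteq P$ the definition demands $2^{|S|}=|\{S\cap\mathrm{Ranges}(x,r)\}|\leq (n+1)|B|$, whence
\[
|S|\;\leq\;\log\!\bigl((n+1)|B|\bigr)\;\leq\;\log(n+1)+d\log\!\left(\tfrac{N}{\sqrt{d}\Delta}+O(1)\right)\;=\;O\!\left(d\log\!\tfrac{Nn}{\Delta}\right),
\]
which is exactly the claimed bound. I do not anticipate a real obstacle here: the only mild subtlety is keeping track of the $\sqrt{d}$ factor in $|X|$ and verifying that the final $O$-expression indeed absorbs the $\log n$ term into $d\log(Nn/\Delta)$, which is immediate since $d\geq 1$. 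The essential content is entirely combinatorial counting, with the bounded cyclic cost function contributing only the $[0,1]$-boundedness used in step one.
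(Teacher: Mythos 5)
Your proof is correct and follows essentially the same route as the paper's: bound the finite range family by combining the per-query monotonicity argument (at most $n+1$ distinct sets $\mathrm{Ranges}(x,r)$ for each fixed $x$, since $D(p,x)\in[0,1]$ and the ranges are nested in $r$) with the cardinality of the discretized query grid $B$, then take a logarithm to bound the size of any shattered set. The only cosmetic difference is that you count $\lvert B\rvert$ directly as $\lvert X\rvert^d=O\bigl((N/(\sqrt{d}\Delta))^d\bigr)$, whereas the paper bounds it via the volume of the ball of radius $N$ containing $B$; both yield the same $O\bigl(d\log(Nn/\Delta)\bigr)$ conclusion.
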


\begin{proof}
The following proof relies on an intuitive generalization of the proof of Lemma~\ref{VcbOUND}. We note again that the VC-dimension of the set of classifiers that output the sign of a sine wave parametrized by a single parameter (the angular frequency of the sine wave) is infinite. Regardless, our query space is bounded, i.e., every query contained in a ball of radius $N$. Thus, we bound the VC dimension as follows. First let $D(p,x)= \sin^2\term{p^T \cdot x \frac{2\pi}{N}}$ we observe that for every $p \in P$ and $x \in B$, $D(p,x) \leq 1$. Hence, for every $x \in B$ and $r \in [0,\infty)$ it holds that 
$$\br{\RANGES(x,r) \middle| r \geq 0} =  \br{\RANGES(x,r) \middle| r \in [0,1]},$$
where $\RANGES(x,r) = \br{p \in P \mid D(p,x) \leq r}$ is defined as in Definition~\ref{def:dimension}. Secondly, by the definition of $\RANGES$, we have that for every pair of $r_1,r_2 \in [0,1]$ and $x \in B$ where $r_2 \geq r_1$, $\RANGES\term{x,r_2} = \bigcup\limits_{r \in \left[r_1, r_2\right]} \RANGES\term{x,r}.$

This yields that $\abs{\br{\RANGES\term{x,r} \middle| r \in [0,1]}} \leq n$ for any $x \in B$, which consequently means that
$$
\abs{\br{\RANGES\term{x,r} \middle| x \in B, r \geq 0}} \leq n\abs{B},
$$
since $x \in B$ is an integer, and each such $x$ would create a different set of $n$ subsets of $P$. 

Since $B$ is contained in a ball of radius $N$, it holds that $\abs{B} \in O\term{\frac{N^d}{\Delta^d}}$. We thus get that $\forall \, S \subseteq P$,
$$
\abs{\br{S \cap \RANGES\term{x,r} \middle| x \in [N], r \geq 0}} \leq \frac{n}{\Delta^d} \mathrm{vol}\term{B} \in 2^{O\term{d \log\term{\frac{N}{\Delta}n}}}.
$$

The lemma then follows since the above inequality states that the VC dimension is bounded from above by $O\term{d \log\term{\frac{N}{\Delta}n}}$.
\end{proof}

\section{ADDITIONAL EXPERIMENTS}
\label{sec:exp_ext}
In this section, we carry additional experiments to show the advantage of our method in comparison with uniform sampling. We note that Figure~\ref{fig:dog_heart} is given to show that the heartbeat data~\cite{dogHeart} is not of the form of a a flat-like line with regular peaks. In what follows, we show additional experiments on with respect to the sine fitting problem and the data discretization problem. For such task, we consider the following dataset:
\paragraph{Bat Echoes~\citep{singlenueron}} -- acquired from the echolocation pulse emitted by the Large Brown Bat (Eptesicus Fuscus). Such a file has a duration of $2.8ms$ and was digitized by considering a sampling period of $7\mu s$, resulting in a file with $400$ samples.

At Figure~\ref{fig:sine-fit-bat}, it is shown that our coreset clearly outperforms uniform sampling with respect to the \fft problem.
\begin{figure}[t!]
    \centering
    \includegraphics[height=.5\textwidth]{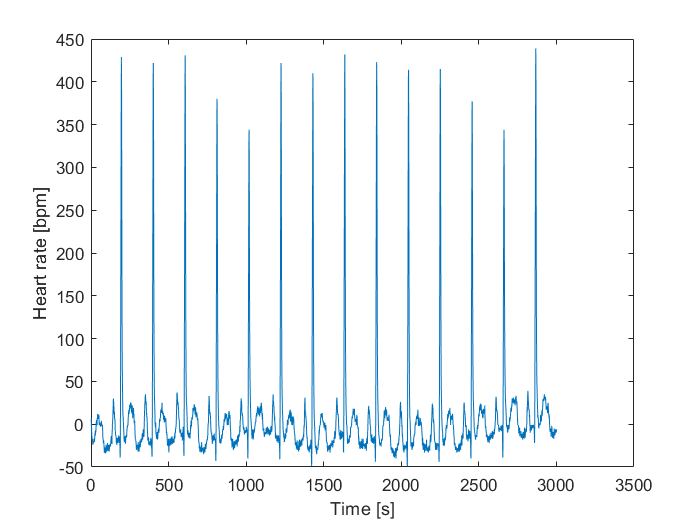}
    \caption{A snippet of the electrocardiogram with respect to a beating heart of some dog.}
    \label{fig:dog_heart}
\end{figure}

\begin{figure*}[htb!]
\includegraphics[width=.49\textwidth]{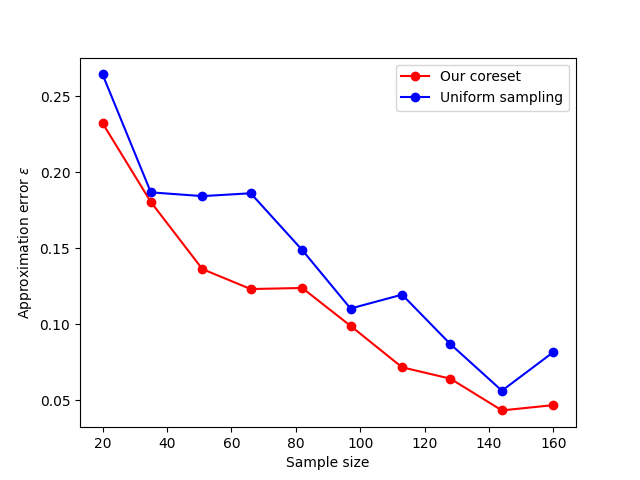}
\includegraphics[width=.49\textwidth]{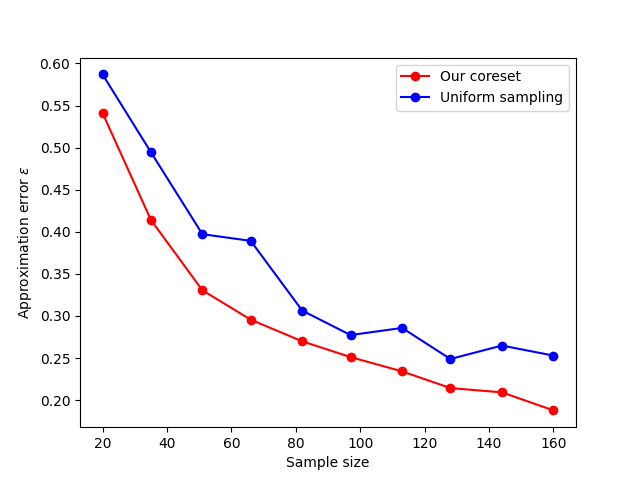}
\caption{The optimal and maximal approximation errors with respect to the \emph{bat echoes} dataset: The x axis is the size of the chosen subset, the y axis is the approximation error across the whole set of queries. The left figure shows the optimal solution approximation error, while the right figure serves to show the maximal approximation error across the whole set of queries.}
\label{fig:sine-fit-bat}
\end{figure*}

We conclude this section with an experiment done to assess the effectiveness of our coreset against that of uniform sampling for the task of data discretization. Figure~\ref{fig:discrete_results} shows the advantage of using our coreset upon using uniform sampling. For this experiment, the optimal solution of~\eqref{eq:our_cost} with respect to each of the coreset and sampled set of points using uniform sampling is computed. Then using each of the solutions, we generate a sine waves such that its phase is equal to the computed solutions. We then project the points of the whole input data on the closest roots of these waves respectively, resulting into two sets of points. Now, we compute the distance between each point and its projected point and sum up the distance for each of the projected sets of points. Finally, we compute the approximated yielded by these values with respect to the value which we would have obtained if this process was done solely on the whole data. 
\newpage
\begin{figure}[htb!]
    \centering
    \includegraphics[height=.5\textwidth]{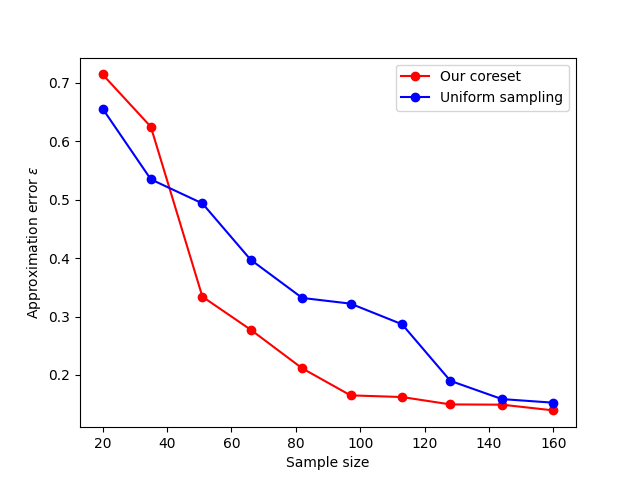}
    \caption{The optimal approximation error with respect to the \emph{bat echoes} dataset: The x axis is the size of the chosen subset, the y axis is the approximation error across the whole set of queries. The left figure shows the optimal solution approximation error, while the right figure serves to show the maximal approximation error across the whole set of queries. This figure is with respect to the data discretization problem.}
    \label{fig:discrete_results}
\end{figure}

\end{document}